\theoremstyle{plain}
\newtheorem{theorem}{Theorem}[section]
\newtheorem{lemma}[theorem]{Lemma}
\theoremstyle{definition}
\newtheorem{definition}[theorem]{Definition}
\theoremstyle{remark}
\newcommand{\E}{\mathbb{E}}
\newcommand{\R}{\mathbb{R}}
\newcommand{\prob}{\mathbb{P}}
\newcommand{\indep}{\perp \!\!\! \perp}
\newcommand*\diff{\mathop{}\!\mathrm{d}}
\definecolor{navyblue}{RGB}{0, 45, 114}
\newcommand{\greencheck}{\textcolor{ForestGreen}{\checkmark}}
\newcommand{\redcross}{\textcolor{BrickRed}{\ding{55}}}
\newcommand{\firstcolor}[1]{{\color{blue}#1}}
\newcommand{\secondcolor}[1]{{\color{purple}#1}}
\newcommand{\thirdcolor}[1]{{\color{orange}#1}}
\def\maketag@@@#1{\hbox{\m@th\normalfont\normalsize#1}}
\title{Foundation Models for Causal Inference via Prior-Data Fitted Networks}
\author{Yuchen Ma\thanks{Equal contribution.} \, \thanks{Corresponding author}\\
  LMU Munich \& MCML\\
  \texttt{yuchen.ma@lmu.de} \\
\And
Dennis Frauen\footnotemark[1]\\
  LMU Munich \& MCML\\
  \texttt{frauen@lmu.de} \\
\And
Emil Javurek\\
  LMU Munich \& MCML\\
  \texttt{emil.javurek@lmu.de} \\
\And
Stefan Feuerriegel\\
  LMU Munich \& MCML\\
  \texttt{feuerriegel@lmu.de} \\
}
\begin{document}
\maketitle

\begin{abstract}
Prior-data fitted networks (PFNs) have recently been proposed as a promising way to train tabular foundation models. PFNs are transformers that are pre-trained on synthetic data generated from a prespecified prior distribution and that enable Bayesian inference through in-context learning. In this paper, we introduce \emph{CausalFM}, a comprehensive framework for training PFN-based foundation models in various causal inference settings. First, we formalize the construction of Bayesian priors for causal inference based on structural causal models (SCMs) in a principled way and derive necessary criteria for the validity of such priors. Building on this, we propose a novel family of prior distributions using causality-inspired Bayesian neural networks that enable CausalFM to perform Bayesian causal inference in various settings, including for back-door, front-door, and instrumental variable adjustment. Finally, we instantiate CausalFM and explicitly train models to perform in-context learning in these settings. We show that CausalFM achieves competitive in-context learning performance even when compared to baselines that are specifically trained for the task at hand. In sum, our framework can be used as a general recipe to train foundation models for various causal inference settings. In contrast to the current state-of-the-art in causal inference, CausalFM offers a novel paradigm with the potential to fundamentally change how practitioners perform causal inference in medicine, economics, and other disciplines. 
\end{abstract}

\section{Introduction}\label{sec:intro}

% Causal inference
Causal inference is a cornerstone of empirical research in disciplines such as economics \citep{Angrist.1990,Imbens.1994}, medicine \citep{Feuerriegel.2024, Weberpals2025}, and marketing \citep{Varian.2016}. It enables the estimation of causal effects from observational and randomized data, which is essential for reliable decision-making \citep{Kern2025}. In personalized medicine, for instance, it supports identifying the most effective treatment by predicting patient outcomes under different therapeutic options.

% Machine learning in causal inference
In recent years, machine learning, and especially deep learning methods, have gained significant traction in causal inference \citep{Curth.2021,ma2025diffusion,ma2024diffpo,Melnychuk.2022, Schweisthal.2023, Shalit.2017, Shi.2019}). These methods offer several advantages for causal effect estimation in practice, including the ability to handle large-scale and high-dimensional datasets with complex confounding structures and to model heterogeneity of causal effects \citep{feuerriegel2025new}. However, most existing approaches require retraining a model for each new dataset. To this end, existing approaches lack the flexibility to perform inference for new datasets without additional retraining, which limits their practicality in real-world settings.

% Foundation models and lack in causal inference
Meanwhile, foundation models have emerged as a transformative paradigm in machine learning \citep{devlin2018bert, lahat2024assessing, touvron2023llama2, touvron2023llama}, which offer a key advantage in that they allow for flexible, test-time inference without retraining. These models are pre-trained on large datasets and can generalize across tasks and domains. Examples include large language models (LLMs) in natural language processing and vision transformers in computer vision. However, this paradigm shift toward test-time inference has not yet had a comparable impact on causal inference. Most current approaches in causal machine learning still rely on specialized models tailored to specific tasks, requiring practitioners to manually select, train, and validate appropriate estimation methods for each new dataset.

% this paper
In this paper, we propose a change to the paradigm for causal inference based on the idea of foundation models trained for tabular causal inference. For this, we build on the recently proposed prior-data fitted networks (PFNs) \citep{Muller.2022, Hollmann.2023}, which are transformers pre-trained on purely synthetic datasets generated from a prespecified prior distribution. PFNs enable Bayesian inference purely through in-context learning, allowing for flexible and efficient predictions without requiring additional training for new tasks \citep{Nagler.2023}. While recent works have demonstrated the effectiveness of tabular foundation models based on PFNs for various tasks, only two concurrent works have proposed PFNs tailored for causal inference \citep{Balazadeh.2025, Robertson.2025}. However, these are either restricted to specific causal inference settings (namely, \textbf{only} back-door adjustment) or do \textbf{not} offer identifiability guarantees.

%CausalFM
We introduce CausalFM, a comprehensive framework for training PFN-based foundation models for various causal inference settings. For this purpose, we introduce CausalFM priors: a novel family of prior distributions based on structural causal models respecting the underlying causal inference problem at hand. We first formalize and derive necessary criteria on how to construct such SCM-based priors for causal inference in principle. Then, we propose a concrete instantiation using Bayesian neural networks and provide a learning algorithm that leverages the SCM's ability to simulate interventional data to perform Bayesian inference in various causal inference settings. 

% Advantages of CausalFM
Compared to classical causal inference methods, models trained based on our CausalFM offer the following advantages: (i)~There is \textbf{no} need for additional training for new datasets as our CausalFM performs inference \emph{entirely through in-context learning}, enabling fast and flexible deployment across new datasets. (ii)~The Bayesian nature of our CausalFM provides \emph{principled uncertainty quantification}, which is critical for downstream decision-making and for detecting situations with poor treatment overlap. (iii)~The model \emph{automatically} learns to ``select'' an identifiability formula based on the data distribution and task at hand. (iv)~Our CausalFM builds upon rigorous identifiability guarantees to ensure valid causal inference.

% Contributions
Our \textbf{contributions}\footnote{Our CausalFM Toolkit is available at \url{https://github.com/yccm/CausalFM-toolkit}; Toolkit docs are at \url{https://causalfm-toolkit.readthedocs.io}; Project code is available at \url{https://github.com/yccm/CausalFM}.} are: (1)~We formalize the constructions of priors based on structural causal models (SCMs) for Bayesian causal inference and derive necessary conditions for their validity. (2)~We propose an explicit \emph{CausalFM} prior based on Bayesian neural networks that are compatible with the structure of the causal inference problem at hand. We also propose a learning algorithm to train PFNs for causal inference problems that leverages our CausalFM prior to simulate counterfactuals to mitigate the fundamental problem of causal inference. (3)~We propose concrete instantiations of our framework by training PFNs for estimating conditional average treatment effects (CATEs) in different causal inference settings. We show empirically that CausalFM performs competitively and outperforms current state-of-the-art CATE estimators on a variety of benchmarks.

\section{Related Work}\label{sec:rw}

We provide an overview of related literature streams. Additional related work is in Appendix~\ref{app:rw}.

\textbf{Amortized causal inference.}
Several recent papers pre-train large neural networks on synthetic data so that they can solve causal tasks via in-context learning. Examples include causal discovery \citep{Mahajan.2025}, ATE estimation under unconfoundedness \citep{Zhang.2024b}, zero-shot- and few-shot learning \citep{Nilforoshan.2023, Iwata.2023}, and reinforcement-learning \citep{Lee.2023}. These methods validate the feasibility of treating causal inference as an in-context learning problem but remain restricted to specific causal inference settings, which typically do \textbf{not} allow accommodating unobserved confounding. 

Black-box causal inference (BBCI) \citep{Bynum.2025} proposes synthetically-pretrained models to perform causal inference in a variety of settings. However, their approach is different: (i)~BBCI does \emph{not} build upon a Bayesian framework. In contrast, building upon PFNs allows us to perform approximate Bayesian causal inference and thus provide rigorous uncertainty quantification. (ii)~The proposed data-generating processes in BBCI are \emph{not} tailored for high-dimensional causal inference settings (as the authors mention in their Sec.~7). In contrast, our CausalFM prior leverages Bayesian neural networks inspired by TabPFN \citep{Hollmann.2023} to create SCM-based prior distributions. (iii)~Beyond proposing a new method, we provide novel formalizations and theoretical results of constructing valid SCM-based priors for Bayesian causal inference. 

Finally, \citet{dhirestimating} propose a framework for amortized estimation of interventional distributions using neural processes \citep{garnelo2018neural}. Our work differs in that we focus on inference on identified causal quantities without knowledge of the full causal graph (e.g., using clusters of confounders).

\begin{wraptable}{r}{0.45\textwidth}
\vspace{-0.4cm}
\caption{\textbf{Overview of identifiability of PFN-based frameworks for causal inference}.}
\vspace{-0.2cm}
\label{tab:rw}
\resizebox{0.45\textwidth}{!}{
\begin{tabular}{lccc}
\toprule
Framework &   Backdoor &  Frontdoor &  IV  \\
\midrule
CausalPFN~\cite{Balazadeh.2025} &  \greencheck & \redcross & \redcross  \\
Do-PFN~\cite{Robertson.2025} &   \greencheck &  \redcross &  \redcross  \\
\midrule
\textbf{Ours} (CausalFM) & \greencheck &  \greencheck &  \greencheck   \\
\bottomrule
\end{tabular}}
\vspace{-0.4cm}
\end{wraptable}
\textbf{PFNs for causal inference:} We are aware of only two concurrent works that propose PFN-based models for causal inference, but each with clear limitations (see Figure~\ref{tab:rw}): (i)~\citep{Balazadeh.2025} proposes a PFN similar to ours, but it is restricted to \textbf{only} back-door adjustment, i.e., imposes the unconfoundedness assumption throughout their paper. In contrast, we propose a framework for constructing PFN-based foundation models for a \emph{large} class of causal inference problems, \emph{including both front-door adjustment and instrumental variable settings with unobserved confounding}. (ii)~\citet{Robertson.2025} proposes to train a \emph{single} PFN on various different causal inference settings \emph{without} providing identifiability assumptions to the model. We will show later that the approach of \citet{Robertson.2025} has a crucial drawback: because the causal quantity of interest is \textbf{not} identified, the PFN learns a posterior that \emph{may never concentrate around the true causal quantity, thus leading to asymptotically non-informative estimators}. In contrast, we propose to infuse our PFNs with identifiability assumptions required for informative causal inference. As such, we follow established philosophy in causal inference that separates identifiability and estimation steps \citep{Kern2025, Pearl.2009}: the identifiability step should be established by the practitioner using domain knowledge (e.g., establishing whether a certain variable is a valid instrument), while the estimation step can be treated as a purely statistical learning problem. %and thus can be handed over to a pre-trained model.

\section{Problem setup}\label{sec:prob_setup}

\subsection{Background on PFNs}

In tabular prediction problems, one considers a population $(X, Y)\sim\prob\in\mathcal{P}$.  
Given a finite sample $\mathcal{D}_n\sim\prob$ of size $n$, the goal is to estimate the conditional distribution $\prob(Y=y \mid X=x)$. PFNs formulate this task in a Bayesian non-parametric way by placing a prior distribution $\Pi$ on $\mathcal{P}$, i.e., a \emph{prior over data-generating distributions}  \citep{Muller.2022, Nagler.2023}. Sampling proceeds hierarchically via $\prob \sim \Pi$ and i.i.d. data $(X_i,Y_i) \sim \prob$. Then, Bayes’ rule yields the posterior distribution $\Pi(\prob \mid \mathcal{D}_n)\;\propto\;\Pi(\mathcal{D}_n\mid \prob)\,\Pi(\prob)$,
where $\Pi(\mathcal{D}_n\mid \prob)$ is the likelihood of the sample $\mathcal{D}_n$ under $\prob$ and $\propto$ denotes proportionality up to a multiplicative constant.
The corresponding \emph{posterior-predictive distribution} is the probability of $Y$ given test point $x$ and observed data $\mathcal{D}_n$, i.e.,
\begin{equation}
    \Pi(Y \mid \mathcal{D}_n, x)
    = \int \prob(Y \mid X=x)\,
            \Pi(\prob \mid \mathcal{D}_n)
      \,\diff\prob.
\end{equation}

PFNs are neural networks $q_\theta(Y \mid \mathcal{D}_n, x)$ that parameterize the family of predictive posterior distributions with trainable parameters $\theta$. That is, PFNs map the entire dataset $\mathcal{D}_n$ and a query $x$ to a distribution over $\mathcal{Y}$.  
In terms of architecture, PFNs are permutation-equivariant transformers \citep{Vaswani.2017} as they allow for scalable training and leverage the attention mechanism to effectively extract information from $\mathcal{D}_n$. PFNs are trained by minimizing the negative log-likelihood loss
$\mathcal{L}(\theta)
    = \E_{N \sim \Pi_N}\!\bigl[
        \E_{\prob \sim \Pi}
          \bigl[ -\log q_\theta(Y \mid X, \mathcal{D}_N) \bigr]
      \bigr]$,
where $\Pi_N$ is a prior on the sample sizes.  
In practice, we sample a sample size $N_j \sim \Pi_N$, a probability distribution $\prob_j \sim \Pi$, a dataset $\mathcal{D}_{N_j}^j \sim \prob_j$, and test points $(x_j, y_j) \sim \prob_j$ and then approximate the PFN loss via
\begin{equation}\label{eq:ppd_approx}
    \hat{\mathcal{L}}(\theta)
    = \sum_{j}
      \bigl[-\log q_\theta(y_j \mid \mathcal{D}_{N_j}^j, x_j)\bigr],
\end{equation}
which is consistent for the exact posterior-predictive under regularity conditions \citep{Nagler.2023}. Note that \emph{all} training data are synthetic, i.e., sampled from the prior $\Pi$. Furthermore, the trained PFN can be deployed on arbitrary real datasets \emph{without} further training.

% --------------------------------------------------------------------
\subsection{Task: causal inference}\label{sec:problem_formalisation}

In this paper, we aim to extend PFNs to causal inference. Here, the main challenge is that the object of interest is an \emph{interventional}\footnote{Causal literature often distinguishes between interventional and counterfactual distributions. This is not relevant for the methods of our paper, and we thus use interventional distribution as an umbrella term.} distribution $\prob_\mathrm{int}$, yet we only observe data $\mathcal{D}_n\sim\prob_\mathrm{obs}$ from a potentially different \emph{observational} distribution \citep{Pearl.2009}.  

% --------------------------------------------------------------------
\paragraph{Motivation.}
As an illustrative example, we consider a standard causal inference setting, called \emph{backdoor adjustment}, where the data comprise $(X,A,Y)\sim\prob_\mathrm{obs}$, where $X$ are patient covariates, $A$ is a treatment, and $Y$ is an outcome of interest \citep{vanderLaan.2006}. For example, in medicine, $X$ may contain treatment history or demographic attributes, $A$ may be a medical treatment, and $Y$ a health outcome. Following the potential outcome framework \citep{Rubin.1974}, let $Y(a)$ denote the outcome that would be realized under the treatment $A=a$. The interventional distribution is thus over $(X,A,Y(1)-Y(0))\sim\prob_\mathrm{int}$, and a common target functional is the \emph{conditional average treatment effect (CATE)} $Q(x)\;=\;\E\bigl[Y(1)-Y(0)\mid X=x\bigr]$ \citep{Wager.2018}. The CATE quantifies the expected benefit of providing treatment given the patient's covariates. 

\textbf{Identifiability.} To estimate CATE from observational data, we need to impose \emph{identifiability assumptions}, which link the observational and the interventional distributions and allow us to express $Q$ as a functional of the observational distribution \citep{Rosenbaum.1983}. These are (i)~consistency: $Y(A) = Y$, (ii)~positivity: $\prob_\mathrm{obs}(A=1 \mid X=x) > 0$, and (iii)~Unconfoundedness: $Y(1), Y(0) \indep A \mid X$ in $\prob_\mathrm{int}$. 

\textbf{Generalized causal inference setting.} In the following, we provide a generalized definition of a causal inference setting, that allows us to reason about arbitrary causal inference settings and provide generalized statements beyond the standard example above.

% --------------------------------------------------------------------
\begin{definition}
We define a \emph{causal inference setting} is a tuple
$\mathcal{C}
  \;=\;
  \bigl(O, \mathcal{P}_\mathrm{obs}\times\mathcal{P}_\mathrm{int},\,Q\bigr)$,
where $O$ collects the observed variables (and contains at least $A$ and $Y$);  
$(\prob_\mathrm{obs},\prob_\mathrm{int})\in\mathcal{P}_\mathrm{obs}\times\mathcal{P}_\mathrm{int}$ are paired observational/interventional distributions over $O$ that correspond to an intervention on $A$;  
and $Q(\prob_\mathrm{int})$ is a causal query that is \emph{identifiable}, i.e.\ there exists a measurable functional $\Bar{Q}$ such that
$Q(\prob_\mathrm{int})
  \;=\;
  \Bar{Q}(\prob_\mathrm{obs})$ for all $(\prob_\mathrm{obs},\prob_\mathrm{int}) \in\mathcal{P}_\mathrm{obs}\times\mathcal{P}_\mathrm{int}$.
\end{definition}

\begin{wrapfigure}{r}{0.7\textwidth}
  \centering
  \vspace{-20pt}
  \includegraphics[width=1\linewidth]{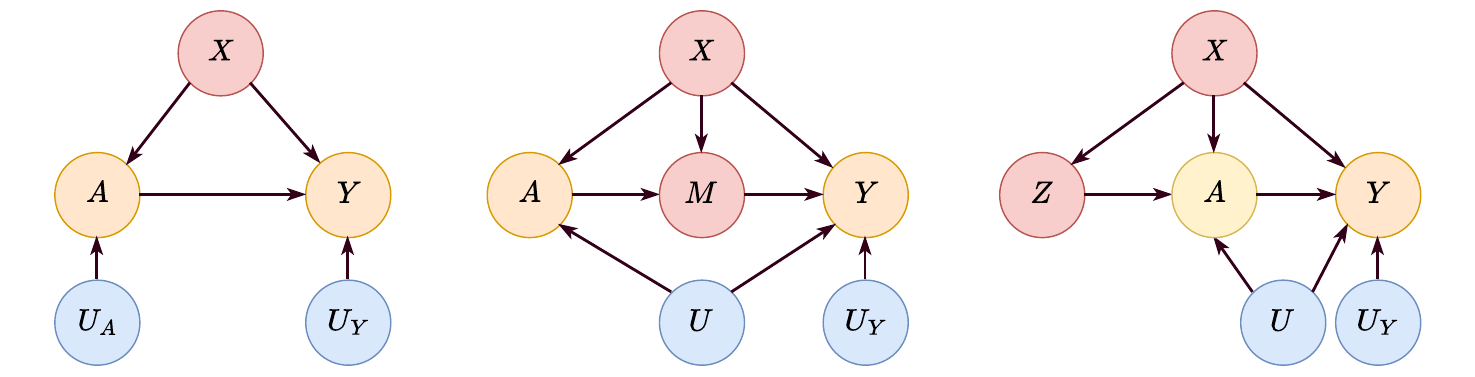}
  \caption{\textbf{C-DAGs compatible with the three example causal inference settings.} Yellow variables are observed, blue variables are unobserved, and red variables are clusters of variables.}
  \label{fig:causal_graphs}
  % \vspace{-8mm}
\end{wrapfigure}
\vspace{5pt}

\subsubsection{Running examples}

$\blacksquare$\,\textbf{Example 1 (back-door adjustment).}  
Here, we continue the example from above and define $O=(X,A,Y)\sim\prob_\mathrm{obs}$ with binary $A$ as above. $\mathcal{P}_\mathrm{obs}\times\mathcal{P}_\mathrm{int}$ contains all observational and interventional distributions that satisfy consistency, positivity, and unconfoundedness. The causal query is the CATE $Q(\prob_\mathrm{int})(x)=\E[Y(1)-Y(0)\mid X=x]$, which is identified as
\begin{equation}
\Bar{Q}(\prob_\mathrm{obs})(x)
  \;=\;
  \E_{\prob_\mathrm{obs}}[Y\mid A=1,X=x]
  \;-\;
  \E_{\prob_\mathrm{obs}}[Y\mid A=0,X=x].
\end{equation}

$\blacksquare$\,\textbf{Example 2 (front-door adjustment).} 
Let $O=(X,A,M,Y)\sim\prob_\mathrm{obs}$, where $X$, $A$, and $Y$ are defined as above and $M$ is a mediator between $A$ and $Y$. Interventional distributions are defined using potential outcomes, i.e., $\bigl(X,\;A,\;M(1),M(0),\;Y(1,M(1)),Y(0,M(0))\bigr)\sim\prob_\mathrm{int}$,
and the causal query of interest again the CATE $Q(\prob_\mathrm{int})(x) =  \E_{\prob_\mathrm{int}}\bigl[Y(1,M(1))-Y(0,M(0))\mid X=x\bigr]$.

\vspace{2pt}\noindent
\emph{Identifiability assumptions.}  
We restrict to pairs $(\prob_\mathrm{obs},\prob_\mathrm{int})$ that satisfy (i)~consistency: $Y = Y(A,M)$ and $M = M(A)$; (ii)~positivity: $\prob_\mathrm{obs}(A=a \mid X=x) > 0$ and $\prob_\mathrm{obs}(M=m \mid A=a, X=x) > 0$; and (iii)~front-door criterion $M(a) \;\perp\!\!\!\perp\; A \mid X=x, \quad\text{and} \quad Y(a',m) \;\perp\!\!\!\perp\; M \mid A=a',\,X=x$. Under these assumptions, the CATE is identified and $\widetilde{Q}$ is given via the conditional version of Pearl’s front-door adjustment formula \citep{Pearl.2009}.

$\blacksquare$\,\textbf{Example 3 (Instrumental variables).}  
Let $O=(X,Z,A,Y)\sim\prob_\mathrm{obs}$, where $Z$ is an instrumental variable that causes the treatment $A$ but does not directly cause the outcome $Y$. The interventional distribution is defined on $\bigl(X,Z,A,Y(1), Y(0)\bigr)\sim\prob_\mathrm{int}$ for a fixed treatment intervention $A=a$. We are again interested in the CATE
$Q(\prob_\mathrm{int})(x)
   \;=\;
   \E\!\bigl[Y(1) - Y(0)\mid X=x\bigr]$.

\emph{Identifiability assumptions.}  
We restrict to pairs $(\prob_\mathrm{obs},\prob_\mathrm{int})$ that satisfy the following conditions \citep{Newey.2003}: (i): Additive structural equation: $Y = f(X,A) + g(X, U)$, with (unknown) functions $f$ and $g$ and unobserved confounder $U$, implying that $Y$ does not directly depend on $Z$; (ii)~Independence: $U\perp\!\!\!\perp Z \mid X$; (iii)~Relevance: $\prob_\mathrm{obs}(A\mid X=x, Z=z) > 0$ is non-constant in $z$; and (iv)~Completeness: For every measurable $g$, if $\E_{\prob_\mathrm{obs}}[f(x, A)\mid X=x,Z=z]=0$ for all $z$, then $f(x, A)=0$ almost surely in $\prob_\mathrm{obs}$. Then, the CATE can be shown to be identified via an integral equation \citep{Newey.2003, Hartford.2017}.

\textbf{Research question:} PFNs have shown to be an effective way to construct tabular foundation models. However, a causal inference setting $\mathcal{C}$ comes with additional challenges, such as the distinction between observational and interventional distribution as well as identifiability assumptions. 
\begin{tcolorbox}[
  colback=navyblue!5,       % light navy blue background
  colframe=navyblue,        % navy border
  title=\textbf{Research question},
  boxrule=0.5pt,
  arc=2mm,
  left=1mm,
  right=1mm,
  top=1mm,
  bottom=1mm
]
How can we train PFNs for a causal inference setting $\mathcal{C}$ that provides a Bayesian estimator of $Q(\prob_\mathrm{int})$ given an observational dataset $\mathcal{D}_n \sim \prob_\mathrm{obs}$ and some context (e.g., values $x$ or $a$)?
\end{tcolorbox}

In the following, we introduce CausalFM consisting of (i)~appropriate prior distributions that allow for approximating \emph{interventional} predictive posterior distributions as in Eq.~\eqref{eq:ppd_approx}(Section~\ref{sec:priors}) and (ii)~a training algorithm for the underlying PNFS (see Section~\ref{sec:training}).

% --------------------------------------------------------------------

\section{CausalFM: Priors}
\label{sec:priors}

In this section, we construct prior distributions for CausalFM which are based upon identifiable structural causal models (SCMs). We motivate and formalize our approach (Sec.~\ref{sec:formalization}, provide necessary criteria for valid causal inference (Sec.~\ref{sec:necissary_prior}), and finally provide a method for constructing such priors in practice (Sec.~\ref{sec:constructing}). We also provide a complete toy example in Appendix~\ref{app:example}.

\subsection{Introducing SCM-based priors}\label{sec:formalization}

\textbf{Na{\"i}ve approach.} A na{\"i}ve approach for causal inference would construct a prior $\Pi$ directly for the observational distribution $\prob_\mathrm{obs}$. If the posterior $\Pi(\prob_\mathrm{obs} \mid \mathcal{D}_n) \to \prob_\mathrm{obs}^\ast$ converges to the ground-truth observational distribution $\prob_\mathrm{obs}^\ast$ (i.e., satisfying a Bernstein-von-Mises theorem), we can obtain a consistent Bayesian estimator of our causal query via $\Bar{Q}(\Pi(\prob_\mathrm{obs} \mid \mathcal{D}_n))$. Accordingly, we \textit{could} train a PFN $q_\theta(Y \mid \mathcal{D}_n, x)$ with the loss in Eq.~\eqref{eq:ppd_approx} and estimate the CAPO via $\Bar{Q}(q_\theta(Y \mid \mathcal{D}_n, x))$.

\textit{However}, the above approach has \textit{drawbacks}: (i)~It requires knowledge of the identification formula $\Bar{Q}$, which must be determined on a case-by-case basis depending on the causal inference setting $\mathcal{C}$ at hand. This can be tedious or even hard to compute in practice. For example, the IV setting from Example 3 requires solving integral equations to compute $\Bar{Q}$ \citep{Newey.2003}.  (ii)~Constructing a prior for $\prob_\mathrm{obs}$ makes it harder control the distribution of the causal query $Q$ directly. It has been shown in the literature that this can lead to prior misspecification for Bayesian causal inference or slowly converging posterior distributions \citep{Linero.2022}.

\textbf{Modeling the interventional distribution.} Motivated by these drawbacks of constructing priors for only $\prob_\mathrm{obs}$, we propose to construct priors for \emph{observational-interventional distribution pairs} $(\prob_\mathrm{obs}, \prob_\mathrm{int})$, resulting in priors defined on $\mathcal{P}_\mathrm{obs}\times\mathcal{P}_\mathrm{int}$. This addresses both drawbacks by (i)~inducing an \emph{interventional posterior distribution}, thus only requiring knowledge of $Q$ (not $\Bar{Q}$); and (ii) we will see that priors on $\mathcal{P}_\mathrm{obs}\times\mathcal{P}_\mathrm{int}$ often allow to specify the prior distribution of $Q(\prob_\mathrm{int}))$ directly. A natural way to define distributions on $\mathcal{P}_\mathrm{obs}\times\mathcal{P}_\mathrm{int}$ is via structural causal models (SCMs).

\begin{definition}[SCMs \citep{Pearl.2009}]
A (semi-Markovian) \emph{structural causal model (SCM)} $\mathcal{S}$ is a tuple $\bigl(Z,U,f,\mathbb{P}\bigr)$, where 
$\mathbf Z=(Z_1,\ldots,Z_k)$ are observable \textbf{endogenous} variables,  
$U$ collects latent \textbf{exogenous} variables,  
$f=\{f_{Z_1},\ldots,f_{Z_k}\}$ contains structural assignments $Z_i=f_{Z_i}(\mathit{pa}(Z_i))$ with parents $\mathit{pa}(Z_i)\subseteq Z\cup U$,  
and $\mathbb{P}$ is a joint distribution on $U$.
\end{definition}

Every SCM induces a unique directed acyclic graph (DAG), 
$\mathcal{G}^\mathcal{S}$ by defining mapping of the parents $pa(Z_i)$ to $Z_i$ with directed edges. We distinguish two types of latent variables $U_i$ in $\mathcal{G}^\mathcal{S}$: $U_i$ is an \emph{unobserved confounder} if it is the parent of both $A$ and $Y$, otherwise, we call it a \emph{noise variable}.
Intuitively, an SCM is a simulator: we can draw latent variables $U\sim\mathbb{P}$ and pass them through structural functions $f$, resulting in an induced observational distribution $\prob_{\mathrm{obs}}^{\mathcal{S}}$ over $Z$. At the same time, we can modify the SCM by intervening on a variable via $do(A=a)$, i.e., fixing the variable and then sampling from the SCM mechanism. This induces a corresponding interventional distribution  $\prob_{\mathrm{int}}^{\mathcal{S}}$. We call an SCM $\mathcal{S}$ \emph{compatible} with a causal inference setting $\mathcal{C}$, if $(\prob_{\mathrm{obs}}^{\mathcal{S}},\prob_{\mathrm{int}}^{\mathcal{S}})
\in \mathcal{P}_{\mathrm{obs}}\times\mathcal{P}_{\mathrm{int}}$.

\begin{definition}[$\mathcal{C}$-SCM-Priors]
A \emph{$\mathcal{C}$-SCM-Prior} is any probability measure $\Pi(\mathcal{S})$ that puts all its mass on SCMs compatible with $\mathcal{C}$.  
Via the map $\mathcal{S}\mapsto(\prob_{\mathrm{obs}}^{\mathcal{S}},\prob_{\mathrm{int}}^{\mathcal{S}})$ every such prior induces a distribution $\Pi\bigl((\prob_{\mathrm{obs}},\prob_{\mathrm{int}})\bigr)$ on $\mathcal{P}_\mathrm{obs} \times \mathcal{P}_\mathrm{int}$.
\end{definition}

Sampling from $\Pi$ therefore amounts to sampling a random latent distribution $\prob$ over $U$ as well as random functional assignments $f$. These can then be used to internally sample an observational dataset $\mathcal{D}_n$, i.e., there is a well-defined likelihood $\Pi(\mathcal{D}_n \mid \mathcal{S})$ induced by $\mathcal{S}$. As a consequence, we can define the posterior distribution over SCMs via $\Pi(\mathcal{S} \mid \mathcal{D}_n) \propto \Pi(\mathcal{D}_n \mid \mathcal{S}) \Pi(\mathcal{S})$, where $\propto$ denotes proportionality up to a normalization constant.

\paragraph{Cluster-DAGs.} Because an SCM-prior induces a distribution over possibly many DAGs, we compress them into a shared structure. Given variables $\left(Z, U\right)$, a Cluster-DAG (C-DAG) \citep{Anand.2023}] is a DAG on clusters $C_1, \ldots, C_k$ which are disjoint subsets of $(Z, U)$. 
Each $\mathcal{C}$-SCM-Prior induces a unique C-DAG via the following algorithm: (i)~draw an edge $C_i\!\to\!C_j$ whenever any SCMs $\mathcal{S}$ with $\Pi(\mathcal{S}) > 0$ contains some arrow from any node of $C_i$ to any node of $C_j$ \emph{and} no SCM $\mathcal{S}$ with $\Pi(\mathcal{S}) > 0$ contains some arrow from any node of $C_j$ to any node of $C_i$; (ii)~merge $C_i$ and $C_j$ whenever both directions occur across SCMs $\mathcal{S}$ with $\Pi(\mathcal{S}) > 0$.

\subsection{Well-specified priors}\label{sec:necissary_prior}

The question is now how we should design our prior $\Pi$ such that the induced posterior $\Pi(\mathcal{S} \mid \mathcal{D}_n)$ allows for valid Bayesian causal inference. We now define a key desirable property of such priors. For this, we call a prior $\Pi(\mathcal{S})$ \emph{well-specified} for $\mathcal{C}$ if, for any true pair $(\prob_{\mathrm{obs}}^{\ast},\prob_{\mathrm{int}}^{\ast})$ and every sample $\mathcal{D}_{n}\!\sim\!\prob_{\mathrm{obs}}^{\ast}$, it holds that
\begin{equation}
Q\!\Bigl(
\int \prob_{\mathrm{int}}^{\mathcal{S}}\,
           \Pi(\mathcal{S}\mid\mathcal{D}_{n})\,
           \diff \mathcal{S}
\Bigr)
\;\longrightarrow\;
Q(\prob_{\mathrm{int}}^{\ast}),
\quad n\to\infty. 
\end{equation}

In other words, a well-specified prior ensures that the causal query $Q$ evaluated on the \emph{posterior-predictive interventional distribution} (PPID) $\int \prob_{\mathrm{int}}^{\mathcal{S}}\, \Pi(\mathcal{S}\mid\mathcal{D}_{n})\,\diff \mathcal{S}$ is a consistent estimator of the causal target. If we were able to train a PFN to approximate the PPID of a well-specified prior, we are sure that we can apply $Q$ on this distribution and obtain a consistent estimator.

\textbf{Identifiability.} At this point, one may wonder why we only focus on priors for \emph{identifiable} causal inference settings. Indeed, a recently proposed method, called \emph{do-PFN} \citep{Robertson.2025}, does not restrict its PFN priors to identifiable settings. The following result shows that, under weak assumptions, such priors \textit{cannot} be well-specified, leading to asymptotic inconsistency.

\begin{theorem}[Identifiability rules out prior mass on violating SCMs]
\label{thrm:identifiability}
Assume the causal inference setting $\mathcal{C}$ is identifiable for the causal query $Q$ with corresponding identified functional $\bar Q$.
Let
\[
\mathcal{Z}
:=
\Bigl\{\mathcal{S}:\ \prob_{\mathrm{obs}}^{\mathcal{S}}\in\mathcal{P}_{\mathrm{obs}}
\ \text{ and }\ 
Q\!\bigl(\prob_{\mathrm{int}}^{\mathcal{S}}\bigr)\neq \bar Q\!\bigl(\prob_{\mathrm{obs}}^{\mathcal{S}}\bigr)
\Bigr\}
\]
denote the set of identifiability-violating SCMs, and for any $P_{\mathrm{obs}}\in\mathcal{P}_{\mathrm{obs}}$ define the observational equivalence class
\[
\mathcal{E}(P_{\mathrm{obs}}):=\{\mathcal{S}:\prob_{\mathrm{obs}}^{\mathcal{S}}=P_{\mathrm{obs}}\}.
\]
Assume $Q$ is a linear functional of $\prob_{\mathrm{int}}$ and that non-identifiability does not cancel out in prior expectation, i.e.,
for \emph{every} $P_{\mathrm{obs}}\in\mathcal{P}_{\mathrm{obs}}$ with $\Pi\!\bigl(\mathcal{Z}\cap\mathcal{E}(P_{\mathrm{obs}})\bigr)>0$,
\[
\int_{\mathcal{Z}\cap\mathcal{E}(P_{\mathrm{obs}})}
Q\!\bigl(\prob_{\mathrm{int}}^{\mathcal{S}}\bigr)\,\Pi(\diff \mathcal{S})
\;\neq\;
\bar Q(P_{\mathrm{obs}})\,
\Pi\!\bigl(\mathcal{Z}\cap\mathcal{E}(P_{\mathrm{obs}})\bigr).
\]
Then any prior $\Pi(\mathcal{S})$ that is well-specified for $\mathcal{C}$ must satisfy $\Pi(\mathcal{Z})=0$.
\end{theorem}
\begin{proof}
    See Appendix~\ref{app:proofs}.
\end{proof}

% -------------------------------------------------------------
\subsection{Constructing SCM-based priors}\label{sec:constructing}

\textbf{C-DAG design.} Our method for constructing priors assumes the knowledge of a well-specified C-DAG $\mathcal{G}_c$ for $\mathcal{C}$, meaning that $\mathcal{G}_c$ is induced by some well-specified $\mathcal{C}$-SCM-Prior. Such C-DAGs are usually known for most causal inference settings (see Fig.~\ref{fig:causal_graphs} for C-DAGs compatible with the settings in Examples 1--3). 

One point of ambiguity is the modeling of noise variables in C-DAGs. Here, we propose a practical design rule: if $\mathcal{G}_c$ contains an unobserved confounder between $A$ and $Y$, we only add one additional noise variable to \emph{either} $A$ or $Y$. Conversely, if $\mathcal{G}_c$ is unconfounded, we add noise parents to \emph{both} $A$ and $Y$ (see Fig.~\ref{fig:causal_graphs}). The reasoning is as follows: if $\mathcal{G}_c$ is unconfounded, we need to add noise to both $A$ and $Y$ in order to ensure not restrict ourselves to degenerate observational distributions. Conversely, any unobserved confounder $U$ induces noise into both $A$ and $Y$, thus removing the need to add noise to both. However, it is still necessary to add \emph{one} additional noise variable to either $A$ or $Y$ since, otherwise, any unconfounded SCM compatible with $\mathcal{G}_c$ would need to be degenerate in either $A$ or $Y$. We provide a concrete toy example in Appendix~\ref{app:example} to illustrate this.

%\textbf{Algorithm.} 
\textbf{Prior construction.} We now propose a practical algorithm to construct $\mathcal{C}$-SCM-priors. We assume that we have access to a pair $(\mathcal{G}_c, \mathcal{I})$, where $\mathcal{G}_c$ is a well-specified C-DAG for $\mathcal{C}$ and $\mathcal{I}$ is a set of constraints on SCMs $\mathcal{S}$ compatible with $\mathcal{G}_c$ ensuring that $\mathcal{S}$ is also compatible with $\mathcal{C}$. 

$\blacksquare$\,\emph{Example 1: back-door adjustment.}  
The observable variables are $(X,A,Y)$ together with noise variables. A compatible C-DAG is in Fig.~\ref{fig:causal_graphs} (left).  The constraint set is
$\mathcal{I}(\mathcal{S})
  \;=\;
  \{
      \prob_{\mathrm{obs}}^\mathcal{S}(A=a\mid X=x)>0\}$,
ensuring that all SCMs satisfy the positivity assumption.

$\blacksquare$\,\emph{Example 2: Front-door adjustment.}  
Here, the observed variables are $(X,A,M,Y)$ with noise variables and an unobserved confounder $U$ between $A$ and~$Y$.  A compatible C-DAG is in Fig.~\ref{fig:causal_graphs} (middle). The constraint set is $\mathcal{I}(\mathcal{S})
  \;=\;
  \{
      \prob_{\mathrm{obs}}^\mathcal{S}(A=a\mid X=x)>0, \prob_{\mathrm{obs}}^\mathcal{S}(M=m\mid X=x, A=a)>0 
    \}$,
ensuring positivity for both treatments and mediators.

$\blacksquare$\,\emph{Example 3: Instrumental variables.}  
The observed variables are $(X,Z,A,Y)$, augmented by noise variables and an unobserved confounder $U$ that is a joint parent of $A$ and~$Y$ has no edge to the instrument~$Z$; see the compatible C-DAG in Fig.~\ref{fig:causal_graphs}~(right). The constraint set is $\mathcal{I}(\mathcal{S})
  \;=\;
  \{
     \prob_{\mathrm{obs}}^\mathcal{S}(Z=z\mid X=x)>0,
     f_Y^\mathcal{S}(X, A, U) = f^\mathcal{S}(X, A) +  g^\mathcal{S}(X, U)\}$.

\textbf{Overall algorithm.}
Given $(\mathcal{G}_c, \mathcal{I})$, we propose to construct a prior distribution $\Pi$ over SCMs as follows: First, we order the clusters $(C_1, \ldots, C_k)$ according to their hierarchy in the DAG (i.e., $C_1$ has no parents). Then, we iterate over each cluster $C_i$ as follows: if $C_i$ only contains latent variables, fix their distribution to a standard normal distribution via $U^{(i)}\;\sim\;\mathcal{N}\!\bigl(0,\mathbf I\bigr)$.
If $C_i$ is a cluster of observed and latent variables, we assign a clustered Bayesian neural network (BNN) prior to $C_i$ (see below). If $C_i$ only contains observed variables, we assign an observational BNN prior.

\textbf{Clustered BNN prior.} 
For clusters that contain both observed and latent variables, we leverage a BNN-based prior inspired by TabPFN~\citep{Hollmann.2023}. This prior allows us to effectively sample potentially high-dimensional clusters of variables for which the internal causal structure is irrelevant to infer the causal query of interest. The prior is defined via
\begin{equation}
g_{\theta}^{(i)}:\;\mathrm{pa}(C_i)\;\longrightarrow\;\mathbb{R}^{r},
\qquad
\theta\sim \Pi_{C_i} \quad \text{s.t. }  g_{\theta}^{(i)} \text{ satisfying } \mathcal{I}(\mathcal{S}_\theta).
\end{equation}
We then sample random nodes from $g_{\theta}^{(i)}$ that coincide with observed nodes in $C_i$, while the remaining nodes serve as latent noise within the cluster. This corresponds to applying the approach taking in TabPFN~\citep{Hollmann.2023} to clusters $C_i$ in the C-DAG in which the causal structure does not matter for estimating our causal query.

\textbf{Observational BNN prior.} If $C_i$ contains only observed nodes, we define another BNN via
\begin{equation}
f_{\theta}^{(i)}:\;\mathrm{pa}(C_i)\;\longrightarrow\;\mathbb{R}^{|C_i|},
\qquad
\theta\sim \Pi_{C_i} \quad \text{subject to } f_{\theta}^{(i)} \text{ satisfying } \mathcal{I}(\mathcal{S}_\theta),
\end{equation}
and set $C_i=f_{\theta}^{(i)}\!\bigl(\mathrm{pa}(C_i)\bigr)$. The observed nodes within $C_i$ thus correspond to the output of the neural network and are \emph{not} randomly subsampled neurons.

\emph{Example 1: back-door adjustment.} Here, the data distribution $\prob$ can be separated as follows: $(X, U_X) \sim \prob_X$ with $U_X$ denoting noise variables withing the cluster $X$, $U_A \sim \prob_{U_A}$, $U_Y \sim \prob_{U_Y}$, $A = f_A(X, U_A)$, and $Y = f_A(X, A, U_Y)$. Our algorithm proceeds as follows: $\prob_{U_A}$ and $\prob_{U_Y}$ are noise variables and are set to standard normal distributions. The cluster $(X, U_X)$ contains both noise and observed variables, meaning that $\prob_X$ is sampled from an clustered BNN prior. Finally, $A$ and $Y$ are observed variables meaning that $f_A$ and $f_Y$ are sampled from observational BNN priors. We refer to Appendix~\ref{app:data_prior} for full implementation details, including for Example 2 and 3.

\textbf{Notes on identifiability.} Our framework follows established causal inference philosophy and separates identifiability from estimation \citep{Pearl.2009}: the identifiability step ($=$choosing the causal setting) requires careful modeling and usage of domain knowledge, while the estimation step can be handed over to our CausalFM. If practitioners suspect identifiability assumptions may be violated, we recommend performing causal sensitivity analysis \citep{Dorn.2022, Frauen.2023c} to assess the extent of potential violations.

\section{CausalFM: Training}
\label{sec:training}

\subsection{Training algorithm}
We look at the case where the causal query $Q(\prob_\mathrm{int}(Y \mid X))$ is a function of the conditional interventional distribution $\prob_\mathrm{int}(Y \mid X)$ for some contextual observed variables $X$. This includes, e.g., the CATE $\E[Y(1) - Y(0) \mid X]$ and CAPO $\E[Y(a) \mid X]$ from our running examples.

Our goal is to train a PFN $q_\theta(Y \mid x)$ to approximate the conditional PPID (posterior predictive interventional distribution) $\Pi_\mathrm{int}(Y \mid \mathcal{D}_n, X=x) = \int \prob_{\mathrm{int}}^{\mathcal{S}}(Y \mid X=x)\, \Pi(\mathcal{S}\mid\mathcal{D}_{n})\,\diff \mathcal{S}$. Given an SCM prior $\Pi$ and a prior $\Pi_N$ over sample sizes, we propose the following modified PFN loss
\begin{equation}
\mathcal{L}(\theta)
    = \E_{N \sim \Pi_N}\!\bigl[
        \E_{\mathcal{S} \sim \Pi}
          \bigl[  \E_{(X, Y) \sim \prob_\mathrm{int}^\mathcal{S}}
          \bigl[  \E_{\mathcal{D} \sim \prob_\mathrm{obs}^\mathcal{S}}
          \bigl[ -\log q_\theta(Y \mid X, \mathcal{D}_N) \bigr] \bigr] \bigr]
      \bigr].
\end{equation}
Importantly, the dataset $\mathcal{D}$ is sampled from the \emph{observational} distribution, while the pair $(X, Y)$ is sampled from the \emph{interventional} distribution induced by a random SCM. This ensures that the PFN will aim to predict the interventional outcome $Y$ based on data following the observational distribution. A similar loss has been proposed by \citet{Bynum.2025}, which, however, is only based on the mean-squared error instead of the negative log-likelihood and thus does not allow an interpretation for approximating the PPID in a Bayesian setting. In particular, modeling the entire PPID allows us not only to provide point estimators of our causal query, but also to account for uncertainty.

In practice, we sample the sample size $N_j \sim \Pi_N$, an SCM $\mathcal{S}_j \sim \Pi$, and an observational dataset $\mathcal{D}_{N_j}^j \sim \prob_\mathrm{obs}^{\mathcal{S}_j}$ by sampling from the SCM. Then, we modify the SCM by performing the intervention of interest (e.g., $\mathrm{do}(A=a)$) and sample test points $(x_j, y_j) \sim \prob_\mathrm{int}^{\mathcal{S}_j}$ from the interventional SCM. The approximated PFN-loss is then
\begin{equation}\label{eq:ppid_approx}
    \hat{\mathcal{L}}(\theta)
    = \sum_{j}
      \bigl[-\log q_\theta(y_j \mid \mathcal{D}_{N_j}^j, x_j)\bigr].
\end{equation}

Finally, once $q_\theta(Y \mid x)$ is trained, we can obtain an estimator for the causal query via $Q(q_\theta(Y \mid X)$, i.e., by applying the causal query on the approximated PPID by the PFN.
%The full algorithm is shown in \TODO{TODO}.

\paragraph{Example: back-door adjustment.}
Here, we sample a sample size $N_j \sim \Pi_N$, an SCM $\mathcal{S}_j \sim \Pi$ from our constructed prior distribution $\Pi$ and an observational dataset $\mathcal{D}_{N_j}^j \sim \prob_\mathrm{obs}^{\mathcal{S}_j}$. Then, we perform two interventions $do(A=1)$ and $do(A=0)$ to obtain test points $(x_j, y_j(1) - y_j(0)) \sim \prob_\mathrm{int}^{\mathcal{S}_j}$. The PFN loss becomes
\begin{equation}\label{eq:ppd_approx_cate}
    \hat{\mathcal{L}}(\theta)
    = \sum_{j}
      \bigl[-\log q_\theta(y_j(1) - y_j(0) \mid \mathcal{D}_{N_j}^j, x_j)\bigr].
\end{equation}

\textbf{Implementation details.} Each observation is tokenized during embedding, with separate encoders applied to observational variables. The resulting tokens are processed by a transformer-based PFN to obtain representations, which are subsequently passed to a Gaussian mixture model (GMM) head. Our implementation of $q_\theta(Y \mid x)$ is based on the TabPFN architecture \citep{Hollmann.2023}. We train the model with a learning rate of $1\mathrm{e}{-3}$, weight decay $1\mathrm{e}{-5}$, batch size $16$, and sequence length $1024$ for up to $150$ epochs. Training CausalFM on a single NVIDIA A100 GPU takes about $24$ hours. Details on the data prior and generation details are provided in Appendix~\ref{app:data_prior}, while the full implementation is given in Appendix~\ref{app:imp_model}.

\begin{wraptable}{r}{0.58\textwidth} 
    \centering
    \vspace{-1.2cm}
    \caption{\textbf{Standard CATE estimation} over 10 synthetic datasets and Jobs dataset. }
    \vspace{-0.3cm}
    \label{tab:results_syn_cate}
    \scalebox{0.73}{
    \begin{tabular}{lcc}
    \toprule
    {Method} & {Synthetic} & {Jobs} \\
    \midrule
    \addlinespace[2pt]
    \rowcolor{gray!20}
    \multicolumn{3}{l}{{BASELINES (A): STANDARD CATE ESTIMATORS}}\\
    \cmidrule(lr){1-3}
    S-learner \citep{kunzel2019metalearners} & 0.734 $\pm$ {\tiny 0.16} & 0.697 $\pm$ {\tiny 0.18} \\
    T-learner \citep{kunzel2019metalearners} & 0.661 $\pm$ {\tiny 0.17} & 0.822 $\pm$ {\tiny 0.18} \\
    TARNet \citep{shalit2017estimating} & 0.854 $\pm$ {\tiny 0.23} & 0.864 $\pm$ {\tiny 0.24} \\
    DR-learner \citep{kennedy2023towards} & 0.765 $\pm$ {\tiny 0.17} & 0.959 $\pm$ {\tiny 0.18} \\
    RA-learner \citep{curth2021nonparametric} & 0.609 $\pm$ {\tiny 0.13} & 0.652 $\pm$ {\tiny 0.15} \\
    X-learner \citep{kunzel2019metalearners} & \thirdcolor{0.563} $\pm$ {\tiny 0.15} & 0.802 $\pm$ {\tiny 0.18} \\
    \midrule
    \addlinespace[4pt]
    \rowcolor{gray!20}
    \multicolumn{3}{l}{{BASELINES (B): FOUNDATION MODELS-BASED METHODS}}\\
    \cmidrule(lr){1-3}
    CausalPFN \citep{Balazadeh.2025} & \secondcolor{0.557} $\pm$ {\tiny 0.18} & \thirdcolor{0.528} $\pm$ {\tiny 0.16} \\
    DoPFN \citep{Robertson.2025} & 0.586 $\pm$ {\tiny 0.19} & \secondcolor{0.482} $\pm$ {\tiny 0.20} \\
    \cmidrule(lr){1-3}
    \textbf{CausalFM (ours)} & \firstcolor{0.515} $\pm$ {\tiny 0.20} & \firstcolor{0.478} $\pm$ {\tiny 0.18} \\
    \bottomrule
    \multicolumn{3}{p{0.7\textwidth}}{Lower = better. Reported: PEHE (mean $\pm$ std). Top-three per column are in \firstcolor{blue}, \secondcolor{purple}, \thirdcolor{orange}.}
    \end{tabular}
    } % End of scalebox
    \vspace{-0.5cm}
\end{wraptable}

 \section{Experiments}\label{sec:experiments}

We evaluate our method across three causal inference settings: standard CATE estimation, instrumental variables (IV), and front-door adjustment.  

\textbf{Evaluation metrics.} 
We report the precision in estimating heterogeneous effects (PEHE)~\citep{curth2021nonparametric, hill2011bayesian}, defined as the root mean squared deviation between predicted and ground-truth CATE, to evaluate the model performance on the CATE estimation task.  

\subsection{Evaluation for standard CATE setting}
\label{sec:exp_stand_cate}

\textbf{Baselines for standard CATE estimation.} We consider a broad range of state-of-the-art methods for the conditional treatment effect estimation from the literature: (1)~\textbf{S-learner} \citep{kunzel2019metalearners}: the S-learner is a model-agnostic learner that trains a single regression model by concatenating the covariate and the treatment as input; (2)~\textbf{T-learner} \citep{kunzel2019metalearners}: the T-learner is a model-agnostic learner that trains separate regression models for treated and control groups; (3)~\textbf{X-learner} \citep{kunzel2019metalearners}: builds upon the T-learner by first imputing individual treatment effects in each group and then fitting models to these pseudo-effects; (4)~\textbf{TARNet} \citep{shalit2017estimating}: using representation learning to extract features of covariates and train separate branches for treated and control groups with regularization; (5)~\textbf{DR-learner} \citep{kennedy2023towards}: generates pseudo-outcomes based on the doubly-robust AIPW estimator; (6)~\textbf{RA-learner} \citep{curth2021nonparametric}: uses a regression-adjusted pseudo-outcome in the second stage. We also include two PFN-based foundation models for treatment effect estimation: (7)~\textbf{CausalPFN} \citep{Balazadeh.2025} and (8)~\textbf{DoPFN} \citep{Robertson.2025}. Further implementation details are in Appendix~\ref{app:baseline_implementation}.

\textbf{Results on standard CATE estimation.} 
We benchmark our model on ten synthetic datasets generated under diverse mechanisms, with implementation details in Appendix~\ref{app:implementation}. 
In addition, we evaluate on a semi-synthetic version of the Jobs dataset \citep{smith2005does}, derived from the widely used LaLonde study \citep{lalonde1986evaluating}. Here, we generate outcomes to create a semi-synthetic dataset and allow for evaluation against ground-truth.

Table~\ref{tab:results_syn_cate} reports the averaged PEHE across the synthetic datasets (full results in the Appendix) and the Jobs dataset. Our experiments show that CausalFM achieves competitive CATE estimation performance across all benchmarks, \emph{without requiring model retraining}.

\begin{wraptable}{r}{0.55\textwidth}
% \vspace{-0.3cm}
\centering
\caption{\textbf{IV setting} for CATE estimation with binary and continuous instrument variables.  }
\vspace{-0.2cm}
\label{tab:binary_and_conti_iv}
\small
\resizebox{0.53\textwidth}{!}{%
\begin{tabular}{lcc}
    \toprule
    {Method} & {Binary IV} & {Continuous IV} \\
    \midrule
    \addlinespace[2pt]
    \addlinespace[2pt]
    \rowcolor{gray!20}
    \multicolumn{3}{l}{{BASELINES (A): STANDARD IV ESTIMATORS}}\\
    \cmidrule(lr){1-3}
% \midrule
% 2SLS \citep{Wright.1928} & X $\pm$ {\tiny X} & X $\pm$ {\tiny X} \\
KIV \citep{Singh.2019} & \thirdcolor{0.454} $\pm$ {\tiny 0.16} & \secondcolor{0.577} $\pm$ {\tiny 0.20} \\
DRIV \citep{Syrgkanis.2019} & 0.531 $\pm$ {\tiny 0.18} & 0.693 $\pm$ {\tiny 0.20} \\
DeepIV \citep{Hartford.2017} & \secondcolor{0.427} $\pm$ {\tiny 0.15} & \firstcolor{0.516} $\pm$ {\tiny 0.13} \\
DeepGMM \citep{Bennett.2019} & 0.503 $\pm$ {\tiny 0.20} & 0.588 $\pm$ {\tiny 0.21} \\
DMLIV \citep{Syrgkanis.2019} & 0.479 $\pm$ {\tiny 0.23} & 0.618 $\pm$ {\tiny 0.20} \\
DFIV \citep{Xu.2021}  & 0.709 $\pm$ {\tiny 0.29} & 0.583 $\pm$ {\tiny 0.30} \\
MRIV \citep{frauen2022estimating} & 0.688 $\pm$ {\tiny 0.21} & 0.641 $\pm$ {\tiny 0.24} \\
    \midrule
    \addlinespace[4pt]
    \rowcolor{gray!20}
    \multicolumn{3}{l}{{BASELINES (B): FOUNDATION MODELS-BASED METHODS}}\\
    \cmidrule(lr){1-3}
    DoPFN \citep{Robertson.2025} & 0.523  $\pm$ {\tiny 0.20} & 0.675 $\pm$ {\tiny 0.37} \\
    \cmidrule(lr){1-3}
\textbf{CausalFM (ours)} & \firstcolor{0.422} $\pm$ {\tiny 0.16} & \thirdcolor{0.579} $\pm$ {\tiny {0.21}} \\
\bottomrule
\multicolumn{3}{p{0.63\textwidth}}{Lower = better. Reported: PEHE (mean $\pm$ standard deviation). Top-three per column are in \firstcolor{blue}, \secondcolor{purple}, \thirdcolor{orange}.}
\end{tabular}%
}
% \vspace{-0.2cm}
\end{wraptable}

\textbf{Baselines for IV setting.} We benchmark against a broad set of state-of-the-art IV methods for treatment effect estimation: (1)~\textbf{KIV} \citep{Singh.2019}: a nonlinear extension of two-stage least squares using kernel ridge regression with feature maps; (2)~\textbf{DFIV} \citep{Xu.2021}: extends KIV by parameterizing feature maps with neural networks trained iteratively; (3)~\textbf{DeepIV} \citep{Hartford.2017}: a two-stage neural approach, first estimating the treatment distribution and then solving a counterfactual prediction task; (4)~\textbf{DeepGMM} \citep{Bennett.2019}: formulates IV estimation as a minimax game based on the generalized method of moments, solved via adversarial training; (5)~\textbf{DMLIV} \citep{Syrgkanis.2019}: a double machine learning framework that estimates nuisance functions and learns the CATE by orthogonalized regression; (6)~\textbf{DRIV} \citep{Syrgkanis.2019}: a meta-learner combining DMLIV with doubly robust pseudo-outcomes for improved stability; and (7)~\textbf{MRIV} \citep{frauen2022estimating}: a multiply robust framework for binary IVs that directly estimates CATE via pseudo–outcome regression. For foundation model baselines, as CausalPFN \citep{Balazadeh.2025} is only for back-door adjustment, we include DoPFN \citep{Robertson.2025}. 

\vspace{-0.2cm}
\subsection{Evaluation for IV setting}
\label{sec:exp_iv}

\begin{wraptable}{r}{0.5\textwidth}
\centering
\vspace{-0.4cm}
\caption{\textbf{Front-door adjustment setting} for CATE estimation.  }
\vspace{-0.3cm}
\label{tab:front_door}
\small
\resizebox{0.5\textwidth}{!}{%
\begin{tabular}{lc}
    \toprule
    {Method} & {PEHE} \\
    \midrule
    \addlinespace[2pt]
    \rowcolor{gray!20}
    \multicolumn{2}{l}{{BASELINES (A): STANDARD FRONT DOOR ADJUSTMENT}}\\
    \cmidrule(lr){1-2}
% \midrule
Plug-in front-door learner (Linear) \cite{Pearl.2009} & 1.124 $\pm$ {\tiny 0.28} \\
Plug-in front-door learner (RF) \cite{Pearl.2009} & 1.364 $\pm$ {\tiny 0.52} \\
Plug-in front-door learner (NN) \cite{Pearl.2009} & \firstcolor{0.889} $\pm$ {\tiny 0.38} \\
    \midrule
    \addlinespace[4pt]
    \rowcolor{gray!20}
    \multicolumn{2}{l}{{BASELINES (B): FOUNDATION MODELS-BASED METHODS}}\\
    \cmidrule(lr){1-2}
    DoPFN \citep{Robertson.2025} & \thirdcolor{1.274} $\pm$ {\tiny 0.24} \\
    \cmidrule(lr){1-2}
\textbf{CausalFM (ours)} & \secondcolor{{0.847}} $\pm$ {\tiny 0.34} \\
\bottomrule
\multicolumn{2}{p{0.6\textwidth}}{Lower = better. Reported: PEHE (mean $\pm$ standard deviation). Top-three per column are in \firstcolor{blue}, \secondcolor{purple}, \thirdcolor{orange}.}
\end{tabular}%
}
% \vspace{-0.5cm}
\end{wraptable}

\textbf{Results on IV setting.} We evaluate our models on datasets with varying confounding strengths. Table~\ref{tab:binary_and_conti_iv} reports the averaged PEHE for binary and continuous IVs. Note that CausalPFN is \textit{not} designed for IV settings. In contrast, we find that our CausalFM consistently achieves comparable performance relative to standard IV estimators and outperforms biased alternatives. Importantly, in contrast the the standard baselines, these results hold \emph{without requiring model retraining}. Hence, this confirms the flexibility of our approach to IV settings.

% \color{blue}

\subsection{Front-door adjustment}
\label{sec:exp_frontdoor}

We additionally evaluate our model under the front-door adjustment setting in Table~\ref{tab:front_door}. Due to space constraints, details are provided in Appendix~\ref{app:frontdoor}. The experiments show the flexibility of our method to perform causal inference in the front-door adjustment setting.

\subsection{Discussion}
\textbf{Limitations and future work.} The current evaluation is limited to synthetic and semi-synthetic data due to the fundamental problem of missing potential outcomes on real-world data. For future work, it will be interesting to investigate the performance of CausalFM in applied A/B experimental setups to assess its empirical performance and robustness under real-world conditions. Additionally, an important research direction will be to incorporate interpretability or fairness constraints into CausalFM, which is crucial for reliable deployment in practice.
\color{black}

\clearpage

\textbf{Ethics statement.}

\noindent\emph{Human subjects and IRB.}
This work does not involve experiments with human subjects. Our training data are \emph{synthetically} generated from prespecified SCM-based priors. For empirical evaluation, we additionally use publicly available benchmark data (e.g., Jobs) where outcomes are generated in a semi-synthetic manner following common practice; no identifiable personal information is introduced by us. Accordingly, no IRB review was required for this study.

\noindent\emph{Data, privacy, and security.}
We do not collect, store, or release sensitive personal data. Public datasets are used under their respective licenses, and our semi-synthetic outcome generation avoids re-identification risks. We will document preprocessing and generation steps to support reproducibility.

\noindent\emph{Bias and potential harms.}
Causal estimators can be misused if applied outside the assumed identification regime (e.g., back-door, front-door, IV) or under severe violations (e.g., weak instruments, lack of overlap). To mitigate harm:
(i) we make assumptions explicit and provide uncertainty quantification;
(ii) we advocate domain-expert validation and sensitivity checks before deployment;
(iii) we discourage high-stakes automated decision-making without human oversight.

\noindent\emph{Use of large language models (LLMs).}
We used LLM-based tools to assist with writing (clarity, grammar) and for literature research. All claims were authored and verified by the authors; citations were cross-checked against primary sources. No sensitive data were provided to LLM tools.

\medskip

\textbf{Reproducibility statement.} We ensure reproducibility of our results by providing the full implementation and training scripts. Our CausalFM Toolkit is available at \url{https://github.com/yccm/CausalFM-toolkit}; Toolkit docs are at \url{https://causalfm-toolkit.readthedocs.io}; Project code is available at \url{https://github.com/yccm/CausalFM}. The repository contains the necessary code to reproduce our experiments, along with instructions for dataset preparation, model training and evaluation procedures. This setup allows independent researchers to replicate the reported results and extend our work with minimal effort.

\clearpage

\textbf{Acknowledgments.}
This paper is supported by the DAAD program “Konrad Zuse Schools of Excellence in Artificial Intelligence”, sponsored by the Federal Ministry of Research, Technology, and Space. Additionally, this work has been supported by the German Federal Ministry of Education and Research (Grant: 01IS24082).
%\printbibliography %Prints bibliography

\bibliography{bibliography}
\bibliographystyle{iclr2026_conference}

\clearpage

\appendix

\section{Extended related work}\label{app:rw}

\paragraph{Prior-data-fitted networks (PFNs) as tabular foundation models.} Foundation models are an emerging paradigm that has revolutionized machine learning for various data modalities, particularly language and vision tasks \citep{devlin2018bert, lahat2024assessing, touvron2023llama2, touvron2023llama}. The same paradigm is now being explored for tabular data -- a modality that underpins the large majority of analyses in science and business \citep{vanBreugel.2024}.
Prior-data-fitted networks (PFNs) \citep{Muller.2022} constitute a powerful approach to training tabular foundation models. PFNs are large transformers trained on synthetic data to perform Bayesian inference through in-context learning. TabPFN \citep{Hollmann.2023, Hollmann.2025} scaled this idea by pairing the transformer with a Bayesian neural network prior over structural causal models (SCMs) and demonstrating state-of-the-art performance on various tabular benchmarks. Subsequent work extended PFNs to time–series forecasting \citep{Hoo.2025} and analyzed their in-context learning abilities theoretically \citep{Nagler.2023}. Critically, all existing PFNs are trained \emph{only} for \emph{predictive} tasks and do \textbf{not} target causal estimands; they therefore are \textbf{not} designed for causal inference of treatment effects, which is the goal of our paper.

\paragraph{Treatment effect estimation.} Causal inference, such as the estimation of average treatment effects, originates from fields like econometrics \citep{Imbens.1994, Angrist.1990}, statistics \citep{vanderLaan.2006}, and epidemiology \citep{Robins.1986, Robins.1994}. Machine learning methods have been proposed to estimate \emph{heterogeneous} effects to support personalized decision-making. One line of work are frequentist methods, which often build on semiparametric theory \citep{Robins.1994b, Robins.1999}, yielding model-agnostic estimators that are doubly robust and Neyman-orthogonal \citep{vanderLaan.2006b, Chernozhukov.2018, Nie.2021, Foster.2023, Kennedy.2023b}. Another line of work builds upon specific machine learning methods/ architectures such as regression trees \citep{Wager.2018} or neural networks \citep{Johansson.2016, Shalit.2017, Shi.2019} and adopts them to causal inference. Bayesian alternatives include Bayesian additive regression trees \citep{Hahn.2020} or Gaussian-process counterfactual regression \citep{Alaa.2017}. However, all of the existing estimators above must be \emph{retrained} for every new dataset. In contrast, our CausalFM allows for pre-trained models to approximate Bayesian causal inference.

% \color{blue}
\subsection{Differences between CausalFM, CausalPFN, and Do-PFN}

\textbf{Identifiability.} CausalFM separates identifiability from estimation. The central motivation of CausalFM is that causal identification (choosing an identifiable setting such as back-door, IV, or front-door) must be handled before estimation. This mirrors classical causal inference practice and ensures that the PFN \textit{only} learns within an identifiable causal setting. Our reasoning for identifiability is as follows:

(1)~\underline{Asymptotically unbiased causal inference:} As we show in Theorem~\ref{thrm:identifiability}, incorporating identifiability assumptions into the prior is \textit{necessary} for asymptotically unbiased causal inference. As a consequence, methods that ignore identifiability assumptions yield biased causal effect estimates, even if we collect large amounts of data. This is highly undesirable in practice.

(2)~\underline{Informative predictive-posterior distributions:} If we do not impose any assumption on the DGP, it is well known that causal effect estimation is not just fundamentally biased, but also that this bias can be of arbitrary size. For example, the backdoor-adjustment bias due to omitted unobserved confounding can be written in closed form depending on confounding strength. Thus, if the PFN-prior assigns positive probability mass for DGPs with arbitrary confounding strength, the predictive-posterior must respect the possibility of arbitrarily biased treatment effects, thus rendering PFN-based inference completely noninformative.

(3)~\underline{Clear separation between domain knowledge and statistical inference:} One might argue that a possible remedy would be to restrict the PFN prior only to DGPs with somewhat “weak” identifiability violations (e.g., weak unobserved confounding). However, we argue that this would correspond to assumptions/ domain knowledge on the DGP, similar to those in our paper, that must be made transparent for practitioners and could also possibly be violated.

In short, our paper follows established causal inference philosophy and separates identifiability from estimation: the identifiability step (choosing the causal setting) requires careful modeling and usage of domain knowledge, while the estimation step can be handed over to our CausalFM. If practitioners suspect identifiability assumptions may be violated, we recommend performing causal sensitivity analysis to assess the extent of potential violations.

In contrast, CausalPFN implicitly assumes \textit{only} back-door adjustment and therefore \textit{cannot} handle IV or front-door, resulting in bias under unobserved confounding. Do-PFN mixes many causal graphs in a single prior without conditioning on which setting is identifiable, which, as our Theorem~\ref{thrm:identifiability} shows, can lead to asymptotically \textit{biased} estimates and non-informative posteriors.

\textbf{Prior construction.} This philosophy requires a fundamentally different prior construction. Because identifiability is encoded at the level of causal structure, CausalFM introduces C-SCM priors and C-DAGs that enforce the assumptions required by each identifiability strategy. Do-PFN does \textit{not} encode identifiability constraints into the prior family for their prior construction. CausalPFN is \textit{restricted} solely to back-door adjustment.

\textbf{Theoretical guarantees.} Beyond this framework, we contribute \textit{new} theoretical results showing that identifiability must be incorporated into PFN priors. Theorem~\ref{thrm:identifiability} proves that if a PFN prior places nonzero mass on SCMs that violate the identifiability conditions of the chosen setting, then the resulting posterior predictive interventional distribution is necessarily misspecified and cannot yield consistent causal effect estimates, even with infinite data. This explains the empirical behavior of Do-PFN, which may return non-informative posteriors when its prior includes SCMs with strong unobserved confounding and no valid instruments. Our theoretical results show that this issue is structural, not merely an implementation detail.

\textbf{Empirical performance.} Empirically, our model outperforms others in different settings. Besides, we also have experiments showing the necessity to have the correct identifiability assumption in the prior specification.

\color{black}
\clearpage
\section{Example for SCM-priors}\label{app:example}

Here, we consider the IV setting from Example 3 with additional normality assumption and empty $X = \emptyset$, i.e., observational distribution $(Z, A, Y) \sim \prob_\mathrm{obs}$. Let us consider the following class of SCMs: 
\begin{align}\label{eq:linear_iv}
   & U \sim \mathcal{N}(0, 1), \,
    \epsilon_Z \sim \mathcal{N}(0, 1), \,
    \epsilon_A \sim \mathcal{N}(0, 1), \,
    \epsilon_Y \sim \mathcal{N}(0, 1), \\
& Z = \alpha \epsilon_Z, + \kappa U \,
A = \beta Z + \delta \epsilon_A + \gamma U, \,
Y = \zeta A + \eta U + \theta \epsilon_Y,
\end{align}
where $U$ is an unobserved confounder between $A$ and $Y$, and $\epsilon_Z$, $\epsilon_A$, $\epsilon_Y$ are noise variables, and $\alpha$, $\beta$, $\gamma$, $\delta$, $\zeta$, $\eta$, and $\theta$ are scalars describing the functional dependences between observed and noise variables. Our causal query is $Q(\prob_\mathrm{int}) = \E[Y(1)] = \zeta$.

\textbf{General approach.} The class of SCMs above is compatible with the linear IV setting whenever it holds that $\kappa = 0$ (independence assumption from Example 3). Hence, we can specify a prior distribution over this class of SCMs by specifying a distribution $\Pi$ over $(\alpha, \beta, \gamma, \delta, \zeta, \eta, \theta)$ and setting $\kappa = 0$. Note that this automatically specifies a distribution over $\prob_\mathrm{obs}$ (by sampling from the SCM) \emph{and} $\prob_\mathrm{int}$ (by intervening and setting $A=1$ in the SCM). Interestingly, this addresses the two drawbacks of observational priors from above as follows: (i)~During the PFN training we can sample $\mathcal{D}_n \sim \prob_\mathrm{obs}$ and $y(1) \sim \prob_\mathrm{int}$ and thus fit $q_\theta(y(1) \mid \mathcal{D}_n)$ for the interventional outcome (see Sec.~\ref{sec:training} for details). For estimating the causal query we can thus use $Q(q_\theta(y(1) \mid \mathcal{D}_n))$ and do not need access to the potentially unknown $\Bar{Q}$. (ii) We can directly control the marginal prior distribution of $\zeta$, thus remedying the above drawbacks and allowing us more control to incorporate prior information of our causal query.

\textbf{Adding identifiability assumptions to the prior.} A key question is whether we should actually impose the identifiability assumption $\kappa = 0$ when constructing a prior. A different approach would be to also put a prior on $\kappa$, thus taking account the possibility of identifiability violations in the prior. Such an approach has been proposed by \citep{Robertson.2025}, where the authors construct a prior over many possible causal inference settings simultaneously. \emph{However}, as we show in the following, this would make consistent Bayesian estimation of the causal query of interest \textit{impossible}, confirming Theorem~\ref{thrm:identifiability}.

\begin{lemma}\label{lem:identifiability}
Let 
$S^\ast =(\alpha^\ast,\beta^\ast,\delta^\ast,\gamma^\ast,\zeta^\ast,\eta^\ast,\theta^\ast,\kappa^\ast=0)$
be an identified ground-truth SCM. Then for any causal target $\zeta \neq \zeta^\ast$ there exists another SCM $S = (\alpha,\beta, \gamma, \delta ,\zeta, \eta,\theta,\kappa)$ with $\kappa \neq 0$ that induces the same observational distribution as $S^\ast$.
\end{lemma}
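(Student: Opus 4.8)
The plan is to reduce the claim to a statement about covariance matrices. Since every SCM in the class \eqref{eq:linear_iv} produces a zero-mean jointly Gaussian vector $(Z,A,Y)$, its observational law $\prob_\mathrm{obs}$ is completely determined by the $3\times 3$ covariance matrix $\Sigma=\mathrm{Cov}(Z,A,Y)$; hence two SCMs induce the same $\prob_\mathrm{obs}$ if and only if they produce the same six entries $\sigma_{ZZ},\sigma_{ZA},\sigma_{AA},\sigma_{ZY},\sigma_{AY},\sigma_{YY}$. First I would substitute the structural equations into one another and record these entries as polynomials in the parameters; writing $\lambda:=\beta\kappa+\gamma$ for the total loading of $U$ on $A$, the ones that matter are
\begin{equation}
\sigma_{ZZ}=\alpha^2+\kappa^2,\qquad
\sigma_{ZA}=\alpha^2\beta+\kappa\lambda,\qquad
\sigma_{AA}=\alpha^2\beta^2+\delta^2+\lambda^2,
\end{equation}
\begin{equation}
\sigma_{ZY}=\zeta\,\sigma_{ZA}+\kappa\eta,\qquad
\sigma_{AY}=\zeta\,\sigma_{AA}+\eta\lambda,\qquad
\sigma_{YY}=\zeta^2\sigma_{AA}+2\zeta\eta\lambda+\eta^2+\theta^2 .
\end{equation}

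The mechanism behind the lemma is visible in $\sigma_{ZY}$. For the identified ground truth ($\kappa^\ast=0$) the instrument acts on $Y$ only through $A$, so $\sigma_{ZY}^\ast=\zeta^\ast\sigma_{ZA}^\ast$ and $\zeta^\ast=\sigma_{ZY}^\ast/\sigma_{ZA}^\ast$ is the usual IV ratio. Once $\kappa\neq 0$ is permitted, this identity becomes $\sigma_{ZY}=\zeta\,\sigma_{ZA}+\kappa\eta$: the bilinear confounding term $\kappa\eta$ is a free dial that can reabsorb whatever change a new target $\zeta\neq\zeta^\ast$ produces in the reduced form. Concretely, I would fix any $\zeta\neq\zeta^\ast$, treat $\kappa$ as a single free parameter, and solve the six moment equations in cascade: $\eta$ from $\sigma_{ZY}$ (which gives $\kappa\eta=(\zeta^\ast-\zeta)\sigma_{ZA}$), then $\lambda$ from $\sigma_{AY}$, then $\alpha^2$ from $\sigma_{ZZ}$, $\beta$ from $\sigma_{ZA}$, $\delta^2$ from $\sigma_{AA}$, $\theta^2$ from $\sigma_{YY}$, and finally recover $\gamma=\lambda-\beta\kappa$. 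This yields a one-parameter family of candidate SCMs indexed by $\kappa$, all matching $\Sigma^\ast$ by construction.

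The main obstacle is \emph{realizability}: the solved values $\alpha^2,\delta^2,\theta^2$ must be nonnegative (so the scale parameters are real) and $\kappa$ must be nonzero. The relation $\kappa\eta=(\zeta^\ast-\zeta)\sigma_{ZA}$ immediately forces $\kappa\neq 0$, since $\zeta\neq\zeta^\ast$ and relevance give $\sigma_{ZA}\neq 0$, so the alternative model is genuinely confounded. For the variance constraints I would show they carve out a nonempty range of $\kappa^2$: the $\sigma_{ZZ}$ equation forces $\kappa^2\le\sigma_{ZZ}$, whereas $\theta^2\ge 0$ forces $\kappa^2$ above a threshold of the form $K^2/M$ with $K=(\zeta^\ast-\zeta)\sigma_{ZA}$ and $M$ independent of $\kappa$; the compatibility $K^2\le M\,\sigma_{ZZ}$ of these two bounds, after substituting the starred values, is exactly the Cauchy–Schwarz inequality $(\eta^\ast\gamma^\ast)^2\le[(\delta^\ast)^2+(\gamma^\ast)^2][(\eta^\ast)^2+(\theta^\ast)^2]$, which holds because $\Sigma^\ast$ is positive semidefinite. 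To dispatch $\delta^2\ge 0$ I would evaluate the family at the endpoint $\theta=0$: there the $\epsilon_Y$ loading vanishes, the mixing matrix from $(\epsilon_Z,\epsilon_A,U)$ to $(Z,A,Y)$ is square, and its determinant equals $\eta\alpha\delta$, so $\det\Sigma^\ast=\eta^2\alpha^2\delta^2$; nondegeneracy of $\prob_\mathrm{obs}$ (i.e.\ $\det\Sigma^\ast>0$) together with $\eta,\alpha\neq 0$ then gives $\delta^2>0$, and by continuity $\delta^2$ stays positive for $\kappa^2$ slightly above the threshold, where $\theta^2>0$ as well. Any such $\kappa$ defines a bona fide SCM with $\kappa\neq 0$, the prescribed $\zeta$, and covariance $\Sigma^\ast$, completing the argument.
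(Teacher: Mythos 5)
Your proposal is correct and rests on the same core idea as the paper's proof: since every SCM in the class induces a zero-mean Gaussian law, it suffices to match the six entries of the covariance matrix, and the confounding term $\kappa\eta$ in $\sigma_{ZY}$ is the degree of freedom that absorbs the change from $\zeta^\ast$ to $\zeta$. Where you genuinely improve on the paper is in the execution. The paper's Step 4 fixes $\delta=\delta^\ast$ as a free choice, solves $\lambda=\kappa\beta+\gamma$ from the $\sigma_{AA}$ equation and $\eta$ from the $\sigma_{ZY}$ equation, and then never verifies that the remaining $\sigma_{AY}$ equation is also satisfied; Step 5 replaces this check with a parameter count. Your cascade (solve $\eta$ from $\sigma_{ZY}$, then $\lambda$ from $\sigma_{AY}$, then $\alpha^2,\beta,\delta^2,\theta^2$ in turn, and recover $\gamma=\lambda-\beta\kappa$) uses each moment equation exactly once to determine exactly one new quantity, so all six equations hold by construction. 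You also supply the realizability analysis the paper omits entirely: the identity $\kappa\eta=(\zeta^\ast-\zeta)\sigma_{ZA}^\ast$ forces $\kappa\neq 0$ under relevance, the window $K^2/M\le\kappa^2\le\sigma_{ZZ}^\ast$ is nonempty precisely by the Cauchy--Schwarz inequality $(\eta^\ast\gamma^\ast)^2\le[(\delta^\ast)^2+(\gamma^\ast)^2][(\eta^\ast)^2+(\theta^\ast)^2]$ implied by $\Sigma^\ast\succeq 0$, and the determinant identity $\det\Sigma^\ast=\eta^2\alpha^2\delta^2$ at the $\theta=0$ endpoint settles $\delta^2>0$. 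These checks do rely on nondegeneracy ($\det\Sigma^\ast>0$, $\beta^\ast\neq 0$), which is implicit in the lemma's phrase ``identified ground-truth SCM'' but worth stating explicitly. In short: same strategy, but your version closes the consistency and nonnegativity gaps in the paper's argument.
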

\begin{proof}
    See Appendix~\ref{app:proofs}.
\end{proof}

Lemma~\ref{lem:identifiability} has an important consequence: if our prior $\Pi$ puts positive probability mass on all possible combinations of $ (\alpha,\beta, \gamma, \delta,\zeta, \eta,\theta,\kappa)$, the corresponding posterior $\Pi( \cdot \mid \mathcal{D}_n)$ will even for $n \to \infty$ put positive probability mass on any $\zeta \in \R$, thus being completely non-informative about the causal target quantity. As a consequence, any Bayesian point estimator using such a prior (e.g., as in under the approach \citep{Robertson.2025}) will be asymptotically biased.

In contrast, we present a different approach to circumvent the above problems: namely, we propose to \emph{construct PFN-priors that incorporate assumptions that allow for identifiability of the causal target quantity} (e.g., setting $\kappa = 0$ in the above example). As such, we follow established philosophy in causal inference that separates identifiability and estimation steps \citep{Pearl.2009}: the identifiability step should be established by the practitioner using domain knowledge (e.g., establishing whether a certain variable is a valid instrument). Once identifiability has been established, we can use Bayesian modeling and PFN-based models for the \emph{estimation step}.

\textbf{Which noise variables to model?} A key question that remains is what classes of SCMs we can use to specify priors for the causal inference setting $\mathcal{C}$ at hand. Indeed, the class of SCMs is non-unique: as suggested in the main paper, it is not necessary to specify both noise variables $\epsilon_A$ and $\epsilon_Y$.

\begin{lemma}\label{lem:noise_iv}
Let  $\mathcal{S}^\ast = (\alpha^\ast, \beta^\ast, \gamma^\ast, \delta^\ast, \zeta^\ast, \eta^\ast, \theta^\ast)$ be a fixed SCM from the above class with $\mathrm{Var}^\ast(A \mid z)>0$ and $\mathrm{Var}^\ast(Y \mid a)>0$ for all $z, a$. Then, there exist unique SCMs $\mathcal{S}_1 = (\alpha_1, \beta_1, \gamma_1, \delta_1 = 0, \zeta_1, \eta_1, \theta_1)$ and $\mathcal{S}_2 = (\alpha_2, \beta_2, \gamma_2, \delta_2, \zeta_2, \eta_2, \theta_2=0)$ that induce the same observational distribution as $\mathcal{S}^\ast$ and thus the same causal query $\zeta_1 = \zeta_2 = \zeta^\ast$. However, whenever it holds that both $\delta = 0$ and $\theta = 0$, there exists an SCM $\mathcal{S}^\ast$ for which $\zeta \neq \zeta^\ast$.
\end{lemma}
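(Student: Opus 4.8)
The plan is to exploit that every SCM in this class is linear with independent mean-zero Gaussian noise, so the observational law of $(Z,A,Y)$ is a centered trivariate Gaussian and is therefore completely determined by its $3\times 3$ covariance matrix $\Sigma$. ``Inducing the same observational distribution'' thus reduces to ``having the same six entries of $\Sigma$.'' First I would write these entries as functions of the parameters and record the identity $\mathrm{Cov}(Z,Y)=\zeta\,\mathrm{Cov}(Z,A)$, which holds for \emph{every} SCM in the class because $U,\epsilon_A,\epsilon_Y$ are independent of $Z$ when $\kappa=0$. Under the maintained instrument relevance $\mathrm{Cov}(Z,A)\neq 0$, this shows that any SCM reproducing $\Sigma$ must satisfy $\zeta=\sigma_{ZY}/\sigma_{ZA}$; hence, once existence is established, $\zeta_1=\zeta_2=\zeta^\ast$ follows automatically and the causal query is preserved ``for free.''

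Next I would reduce the remaining matching equations to three confounding moments. Writing $R=Y-\zeta A$ for the structural residual, one checks the class-wide identities $\mathrm{Var}(A\mid Z)=\gamma^2+\delta^2$, $\mathrm{Cov}(A,R)=\gamma\eta$, and $\mathrm{Var}(R)=\eta^2+\theta^2$, which together with $\sigma_{ZZ},\sigma_{ZA}$ encode all of $\Sigma$. For $\mathcal{S}_1$ I set $\delta_1=0$ and solve in order: $\alpha_1,\beta_1$ from $\sigma_{ZZ},\sigma_{ZA}$; $\gamma_1^2=\mathrm{Var}(A\mid Z)$, which is strictly positive by the first positivity hypothesis; then $\eta_1=\mathrm{Cov}(A,R)/\gamma_1$ and $\theta_1^2=\mathrm{Var}(R)-\eta_1^2$. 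For $\mathcal{S}_2$ I set $\theta_2=0$ and proceed symmetrically: $\eta_2^2=\mathrm{Var}(R)$, which is strictly positive by the second positivity hypothesis (if $\eta^\ast=\theta^\ast=0$ then $Y=\zeta^\ast A$ and $\mathrm{Var}(Y\mid a)=0$), then $\gamma_2=\mathrm{Cov}(A,R)/\eta_2$ and $\delta_2^2=\mathrm{Var}(A\mid Z)-\gamma_2^2$. Uniqueness holds only modulo sign flips of the latent and noise variables (which leave $\Sigma$ unchanged), so I would remove this freedom by fixing the conventions $\alpha,\gamma,\theta\ge 0$ for $\mathcal{S}_1$ and $\alpha,\eta,\delta\ge 0$ for $\mathcal{S}_2$; the remaining coefficients are then pinned down linearly.

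The step I expect to be the main obstacle is verifying that the leftover radicands are nonnegative, that is, that the dropped noise variable can genuinely be absorbed into the retained one. Substituting the $\mathcal{S}^\ast$ parameters should yield the closed forms $\theta_1^2=\theta^{\ast 2}+\eta^{\ast 2}\delta^{\ast 2}/(\gamma^{\ast 2}+\delta^{\ast 2})$ and $\delta_2^2=\delta^{\ast 2}+\gamma^{\ast 2}\theta^{\ast 2}/(\eta^{\ast 2}+\theta^{\ast 2})$, both manifestly $\ge 0$. Carrying out these substitutions cleanly, and confirming that the denominators are nonzero precisely under the two stated positivity conditions, is the crux of the existence-and-uniqueness argument.

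Finally, for the impossibility claim I would set $\delta=\theta=0$ and use $\gamma\neq 0$ (forced by $\mathrm{Var}(A\mid Z)>0$) to write $U=(A-\beta Z)/\gamma$, so that substituting into the $Y$-equation gives $Y=(\zeta+\eta/\gamma)\,A-(\eta\beta/\gamma)\,Z$, a deterministic affine function of $(A,Z)$. Hence $\mathrm{Var}(Y\mid A,Z)=0$ and the induced $\Sigma$ is always singular (rank $\le 2$). I would then exhibit a concrete general SCM, for instance all scalars equal to $1$ (which satisfies both positivity conditions), whose covariance matrix is full rank because $\mathrm{Var}^\ast(Y\mid A,Z)=\theta^{\ast 2}+\eta^{\ast 2}\delta^{\ast 2}/(\gamma^{\ast 2}+\delta^{\ast 2})>0$. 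Since a rank-$2$ matrix can never equal a full-rank $\Sigma^\ast$, no SCM with $\delta=\theta=0$ reproduces this observational law; the restricted family is therefore misspecified for such an $\mathcal{S}^\ast$, so it cannot simultaneously match the observational distribution and carry the correct causal coefficient, yielding the claimed $\zeta\neq\zeta^\ast$.
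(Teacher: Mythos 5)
Your proof is correct and, for the existence-and-uniqueness part, follows essentially the same route as the paper: both arguments reduce the observational law to the Gaussian covariance matrix, identify $\alpha,\beta$ and the Wald ratio $\zeta=\sigma_{ZY}/\sigma_{ZA}$, and then match the three confounding moments $\gamma^2+\delta^2$, $\gamma\eta$, and $\eta^2+\theta^2$ to solve uniquely for $\mathcal{S}_1$ ($\delta_1=0$) and $\mathcal{S}_2$ ($\theta_2=0$); your closed forms $\gamma_1^2=\gamma^{\ast2}+\delta^{\ast2}$, $\eta_1^2=\eta^{\ast2}\gamma^{\ast2}/(\gamma^{\ast2}+\delta^{\ast2})$, etc.\ coincide with the paper's. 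Two points where you improve on the paper's write-up: first, you route the bookkeeping through the structural residual $R=Y-\zeta A$, giving the exactly correct identity $\mathrm{Var}(R)=\eta^2+\theta^2$, whereas the paper writes this as $\mathrm{Var}(Y\mid A)=\eta^2+\theta^2$, which is literally false (conditioning on $A$ shrinks the variance of $U$) even though the ensuing algebra is consistent; second, you explicitly verify nonnegativity of the radicands via $\theta_1^2=\theta^{\ast2}+\eta^{\ast2}\delta^{\ast2}/(\gamma^{\ast2}+\delta^{\ast2})$ and $\delta_2^2=\delta^{\ast2}+\gamma^{\ast2}\theta^{\ast2}/(\eta^{\ast2}+\theta^{\ast2})$, a step the paper skips (and where its displayed formula for $\delta_2^2$ contains a typo, $\eta^{\ast2}$ in place of $\delta^{\ast2}$). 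More substantively, the paper's proof stops after the two constructions and never addresses the final sentence of the lemma; your rank argument --- with $\delta=\theta=0$ the induced covariance is singular because $\mathrm{Var}(Y\mid A,Z)=0$, so it can never match a full-rank $\Sigma^\ast$ --- supplies a clean proof of that missing claim. One small caveat shared with the paper: both arguments implicitly assume instrument relevance $\beta^\ast\neq0$ (so that the Wald denominator is nonzero), which is not stated among the lemma's hypotheses; it would be worth flagging this explicitly.
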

\begin{proof}
    See Appendix~\ref{app:proofs}.
\end{proof}

Lemma~\ref{lem:noise_iv} implies that it \emph{suffices to specify priors over SCMs without either treatment noise $\epsilon_A$ or outcome noise $\epsilon_Y$}. However, if we remove both, there exist interventional distributions for which the prior will never put probability mass on the ground-truth causal query, rendering Bayesian inference inconsistent. In the following, we generalize this result to arbitrary SCMs and causal inference settings.

\clearpage

\section{Proofs}\label{app:proofs}

\subsection{Proof of Theorem~\ref{thrm:identifiability}}

\begin{proof}[Proof of Theorem~\ref{thrm:identifiability}]
Assume, for contradiction, that $\Pi(\mathcal{Z})>0$.
Then, there exists an observational distribution $P_{\mathrm{obs}}\in\mathcal{P}_{\mathrm{obs}}$ such that
\begin{equation}
w := \Pi\!\bigl(\mathcal{Z}\cap\mathcal{E}(P_{\mathrm{obs}})\bigr) > 0,
\qquad
\mathcal{E}(P_{\mathrm{obs}}):=\{\mathcal{S}:\prob_{\mathrm{obs}}^{\mathcal{S}}=P_{\mathrm{obs}}\}.
\end{equation}
Let $\mathcal{W}:=\mathcal{E}(P_{\mathrm{obs}})\setminus \mathcal{Z}$ denote the non-violating subset of the same observational equivalence class.
Pick any $\mathcal{S}^w\in\mathcal{W}$ with $\Pi(\mathcal{S}^w)>0$ (such an SCM exists if $\Pi$ is well-specified for $\mathcal{C}$ on $P_{\mathrm{obs}}$),
and draw data $\mathcal{D}_n\sim \prob_{\mathrm{obs}}^{\mathcal{S}^w}=P_{\mathrm{obs}}$.

Consider the posterior-predictive interventional functional
\begin{equation}
T_n
\;:=\;
Q\!\Bigl(\int \prob_{\mathrm{int}}^{\mathcal{S}}\,\Pi(\mathcal{S}\mid \mathcal{D}_n)\,\diff \mathcal{S}\Bigr).
\end{equation}
Because $Q$ is linear, we have
\begin{equation}
T_n
=
\int Q\!\bigl(\prob_{\mathrm{int}}^{\mathcal{S}}\bigr)\,\Pi(\mathcal{S}\mid \mathcal{D}_n)\,\diff \mathcal{S}.
\end{equation}

\paragraph{Step 1: Split the integral. %$\mathcal{W}$, $\mathcal{Z}$ and the rest.
}
Write $\mathcal{R}:=\mathcal{E}(P_{\mathrm{obs}})^c$. Then
\begin{equation}
T_n
=
% \int_{\mathcal{W}} Q\!\bigl(\prob_{\mathrm{int}}^{\mathcal{S}}\bigr)\,\Pi(\mathcal{S}\mid \mathcal{D}_n)\,\diff \mathcal{S}
% +
\int_{%\mathcal{Z}\cap
\mathcal{E}(P_{\mathrm{obs}})} Q\!\bigl(\prob_{\mathrm{int}}^{\mathcal{S}}\bigr)\,\Pi(\mathcal{S}\mid \mathcal{D}_n)\,\diff \mathcal{S}
+
\int_{\mathcal{R}} Q\!\bigl(\prob_{\mathrm{int}}^{\mathcal{S}}\bigr)\,\Pi(\mathcal{S}\mid \mathcal{D}_n)\,\diff \mathcal{S}.
\end{equation}

\paragraph{Step 2: Posterior concentrates on the observational equivalence class.}
Under standard Bayesian consistency conditions for the observational model
(e.g., KL support of $\Pi$ at $P_{\mathrm{obs}}$), we have
\begin{equation}
\Pi\!\bigl(\mathcal{R}\mid \mathcal{D}_n\bigr)\to 0,
\qquad n\to\infty,
\end{equation}
and, hence, the last integral over $\mathcal{R}$ vanishes asymptotically (assuming the usual integrability for $Q$).

\paragraph{Step 3: Inside $\mathcal{E}(P_{\mathrm{obs}})$, the posterior is proportional to the prior.}
For every $\mathcal{S}\in\mathcal{E}(P_{\mathrm{obs}})$, the observational likelihood is identical because
$\prob_{\mathrm{obs}}^{\mathcal{S}}=P_{\mathrm{obs}}$.
Therefore, for any measurable $A\subseteq \mathcal{E}(P_{\mathrm{obs}})$ and all $n$, we yield
\begin{equation}
\Pi(A\mid \mathcal{D}_n)
=
\frac{\Pi(A)}{\Pi(\mathcal{E}(P_{\mathrm{obs}}))}
\qquad\text{and}\qquad
\Pi(\diff\mathcal{S}\mid \mathcal{D}_n,\ \mathcal{S}\in\mathcal{E}(P_{\mathrm{obs}}))
=
\frac{\Pi(\diff\mathcal{S})}{\Pi(\mathcal{E}(P_{\mathrm{obs}}))}.
\end{equation}
Combining with Step 2 yields
\begin{equation}
T_n
\;\longrightarrow\;
\frac{1}{\Pi(\mathcal{E}(P_{\mathrm{obs}}))}
\int_{\mathcal{E}(P_{\mathrm{obs}})}
Q\!\bigl(\prob_{\mathrm{int}}^{\mathcal{S}}\bigr)\,\Pi(\diff \mathcal{S}),
\qquad n\to\infty.
\end{equation}

\paragraph{Step 4: The limit cannot equal the identifiable target.}
Split the integral over $\mathcal{E}(P_{\mathrm{obs}})$ into $\mathcal{W}$ and $\mathcal{Z}\cap\mathcal{E}(P_{\mathrm{obs}})$:
\begin{equation}
\int_{\mathcal{E}(P_{\mathrm{obs}})}
Q\!\bigl(\prob_{\mathrm{int}}^{\mathcal{S}}\bigr)\,\Pi(\diff \mathcal{S})
=
\int_{\mathcal{W}} Q\!\bigl(\prob_{\mathrm{int}}^{\mathcal{S}}\bigr)\,\Pi(\diff \mathcal{S})
+
\int_{\mathcal{Z}\cap\mathcal{E}(P_{\mathrm{obs}})} Q\!\bigl(\prob_{\mathrm{int}}^{\mathcal{S}}\bigr)\,\Pi(\diff \mathcal{S}).
\end{equation}
By definition of $\mathcal{W}$ and identifiability,
for every $\mathcal{S}\in\mathcal{W}$ we have
$Q(\prob_{\mathrm{int}}^{\mathcal{S}})=\bar Q(P_{\mathrm{obs}})$, and thus
\begin{equation}
\int_{\mathcal{W}} Q\!\bigl(\prob_{\mathrm{int}}^{\mathcal{S}}\bigr)\,\Pi(\diff \mathcal{S})
=
\bar Q(P_{\mathrm{obs}})\,\Pi(\mathcal{W}).
\end{equation}
Hence,
\begin{equation}
\lim_{n\to\infty} T_n
=
\frac{
\bar Q(P_{\mathrm{obs}})\,\Pi(\mathcal{W})
+
\int_{\mathcal{Z}\cap\mathcal{E}(P_{\mathrm{obs}})} Q\!\bigl(\prob_{\mathrm{int}}^{\mathcal{S}}\bigr)\,\Pi(\diff \mathcal{S})
}{
\Pi(\mathcal{W})+\Pi\!\bigl(\mathcal{Z}\cap\mathcal{E}(P_{\mathrm{obs}})\bigr)
}.
\end{equation}
By the non-cancellation assumption (applied to this $P_{\mathrm{obs}}$) and $w>0$, this limit is not equal to $\bar Q(P_{\mathrm{obs}})$.

Finally, since $\mathcal{D}_n\sim \prob_{\mathrm{obs}}^{\mathcal{S}^w}$ and $\mathcal{C}$ is identifiable,
\begin{equation}
Q\!\bigl(\prob_{\mathrm{int}}^{\mathcal{S}^w}\bigr)
=
\bar Q(P_{\mathrm{obs}}).
\end{equation}
Thus $\lim_{n\to\infty} T_n \neq Q(\prob_{\mathrm{int}}^{\mathcal{S}^w})$, contradicting that $\Pi$ is well-specified for $\mathcal{C}$.
Therefore, it must be that $\Pi(\mathcal{Z})=0$.
\end{proof}

\subsection{Proof of Lemma~\ref{lem:identifiability} (linear IV)}

\begin{proof}[Proof of Lemma~\ref{lem:identifiability}]

We prove that for any $\zeta \neq \zeta^*$, there exists an SCM $S = (\alpha, \beta, \gamma, \delta, \zeta, \eta, \theta, \kappa \neq 0)$ that induces the same observational distribution as $S^*$.

\textbf{Step 1: Observational distribution of $S^*$}

The observational distribution is characterized by the covariance matrix $\Sigma^*$ of $(Z^*, A^*, Y^*)$:
\begin{align}
\mathrm{Var}(Z^*) &= (\alpha^*)^2 \\
\text{Cov}(Z^*, A^*) &= (\alpha^*)^2 \beta^* \\
\mathrm{Var}(A^*) &= (\alpha^* \beta^*)^2 + (\delta^*)^2 + (\gamma^*)^2 \\
\text{Cov}(Z^*, Y^*) &= \zeta^* (\alpha^*)^2 \beta^* \\
\text{Cov}(A^*, Y^*) &= \zeta^* [(\alpha^* \beta^*)^2 + (\delta^*)^2 + (\gamma^*)^2] + \eta^* \gamma^* \\
\mathrm{Var}(Y^*) &= \zeta^{*2} [(\alpha^* \beta^*)^2 + (\delta^*)^2 + (\gamma^*)^2] + 2\zeta^* \eta^* \gamma^* + (\eta^*)^2 + (\theta^*)^2
\end{align}

\textbf{Step 2: Construction of alternative SCM $S$}

The covariance matrix $\Sigma$ has elements:
\begin{align}
\mathrm{Var}(Z) &= \alpha^2 + \kappa^2 \\
\text{Cov}(Z, A) &= \alpha^2 \beta + \kappa(\kappa \beta + \gamma) \\
\mathrm{Var}(A) &= (\alpha \beta)^2 + (\kappa \beta + \gamma)^2 + \delta^2 \\
\text{Cov}(Z, Y) &= \zeta(\alpha^2 \beta + \kappa(\kappa \beta + \gamma)) + \eta \kappa \\
\text{Cov}(A, Y) &= \zeta[(\alpha \beta)^2 + (\kappa \beta + \gamma)^2 + \delta^2] + \eta(\kappa \beta + \gamma) \\
\mathrm{Var}(Y) &= \zeta^2[(\alpha \beta)^2 + (\kappa \beta + \gamma)^2 + \delta^2] + 2\zeta \eta(\kappa \beta + \gamma) + \eta^2 + \theta^2
\end{align}

\textbf{Step 3: Parameter matching}

To achieve $\Sigma = \Sigma^*$, we need:
\begin{align}
\alpha^2 + \kappa^2 &= (\alpha^*)^2 \label{eq:z_var} \\
\alpha^2 \beta + \kappa(\kappa \beta + \gamma) &= (\alpha^*)^2 \beta^* \label{eq:za_cov} \\
(\alpha \beta)^2 + (\kappa \beta + \gamma)^2 + \delta^2 &= (\alpha^* \beta^*)^2 + (\delta^*)^2 + (\gamma^*)^2 \label{eq:a_var} \\
\zeta(\alpha^2 \beta + \kappa(\kappa \beta + \gamma)) + \eta \kappa &= \zeta^* (\alpha^*)^2 \beta^* \label{eq:zy_cov} \\
\zeta[(\alpha \beta)^2 + (\kappa \beta + \gamma)^2 + \delta^2] + \eta(\kappa \beta + \gamma) &= \zeta^* [(\alpha^* \beta^*)^2 + (\delta^*)^2 + (\gamma^*)^2] + \eta^* \gamma^* \label{eq:ay_cov} \\
\zeta^2[(\alpha \beta)^2 + (\kappa \beta + \gamma)^2 + \delta^2] + 2\zeta \eta(\kappa \beta + \gamma) + \eta^2 + \theta^2 &= (\mathrm{Var}(Y^*)) \label{eq:y_var}
\end{align}

\textbf{Step 4: Solution construction}

We choose $\kappa \neq 0$ such that $|\kappa| < |\alpha^*|$. We set
\begin{align}
\alpha &= \sqrt{(\alpha^*)^2 - \kappa^2} , \\
\beta &= \frac{(\alpha^*)^2 \beta^*}{\alpha^2 + \kappa^2} = \beta^* \quad \text{(from \eqref{eq:z_var} and \eqref{eq:za_cov})} , \\
\delta &= \delta^* , \\
\kappa \beta + \gamma &= \pm \sqrt{(\gamma^*)^2 - (\alpha^* \beta^*)^2 + (\alpha \beta)^2} \quad \text{(from \eqref{eq:a_var})} .
\end{align}

Since $\alpha \beta = \alpha \beta^* = \frac{\alpha}{\alpha^*} \alpha^* \beta^*$, we have .
\begin{equation}
\alpha \beta)^2 = \frac{\alpha^2}{(\alpha^*)^2} (\alpha^* \beta^*)^2 = \frac{(\alpha^*)^2 - \kappa^2}{(\alpha^*)^2} (\alpha^* \beta^*)^2 .
\end{equation}

Therefore, we have
\begin{equation}
\kappa \beta + \gamma = \pm \sqrt{(\gamma^*)^2 + \frac{\kappa^2}{(\alpha^*)^2} (\alpha^* \beta^*)^2} .
\end{equation}

From Eq.~\eqref{eq:zy_cov} and Eq.~\eqref{eq:ay_cov}, we can solve for $\eta$ via
\begin{equation}
\eta = \frac{\zeta^* (\alpha^*)^2 \beta^* - \zeta(\alpha^2 \beta + \kappa(\kappa \beta + \gamma))}{\kappa} .
\end{equation}

Finally, $\theta$ is determined from Eq.~\eqref{eq:y_var}.

\textbf{Step 5: Existence Verification}

The system has 8 parameters $(\alpha, \beta, \gamma, \delta, \zeta, \eta, \theta, \kappa)$ and 6 constraints (the 6 unique entries of the covariance matrix). Since $\zeta \neq \zeta^*$ is fixed and $\kappa \neq 0$ is chosen, we have 6 remaining parameters for 6 constraints. The key observation is that the introduction of confounding ($\kappa \neq 0$) creates additional correlation structures that can compensate for the change in the causal effect $\zeta$, allowing the observational distribution to remain unchanged.

\end{proof}

\begin{proof}[Proof of Lemma~\ref{lem:noise_iv}.]
Let $\mathcal{S} = (\alpha, \beta, \gamma, \delta, \zeta, \eta, \theta)$ be any SCM from the linear IV class in Eq.~\eqref{eq:linear_iv}. Then, the following coefficients are identified via observational data alone:
\begin{subequations}
\label{eq:app_linear_iv_coef_identified}
\begin{align}
    \alpha &= \sqrt{\mathrm{Var}(Z)}, \\
    \beta  &= \E[Y \mid Z=1], \\
    \zeta & = \frac{\zeta \beta}{\beta} =  \frac{\E[\zeta(\beta + \delta \epsilon_A)] - \E[\zeta(\delta \epsilon_A)]}{\beta}  = \frac{\E[Y \mid Z=1] - \E[Y \mid Z=0]}{\E[A|Z=1] - \E[A|Z=0]}
\end{align}
\end{subequations}
as well as the combination of coefficients
\begin{equation}\label{eq:app_linear_iv_coef_comb_identified}
 \delta^2 + \gamma^2 = \mathrm{Var}(A \mid Z), \qquad\text{and}\qquad \eta^2 + \theta^2 = \mathrm{Var}(Y \mid A)
\end{equation}
and the back-door adjustment
\begin{align}\label{eq:app_linear_iv_back-door_bias}
    \E[Y \mid A=1] - E[Y \mid A=0] &= \zeta + \eta (\E[U \mid A=1] - \E[U \mid A=0]) \\ &= \zeta + \frac{\eta \gamma}{\gamma^2 + \delta^2 + (\beta \alpha)^2}.
\end{align}
Note that the back-door adjustment is biased for $\zeta$ due to unobserved confounding.

\textbf{Noiseless treatment case.} Let now $\mathcal{S}^\ast = (\alpha^\ast, \beta^\ast, \gamma^\ast, \delta^\ast, \zeta^\ast, \eta^\ast, \theta^\ast)$ denote an arbitrary fixed SCM from the linear IV class. We start by constructing $\mathcal{S}_1 = (\alpha_1, \beta_1, \gamma_1, \delta_1 = 0, \zeta_1, \eta_1, \theta_1)$ such that $\prob_\mathrm{obs}^{\mathcal{S}_1} = \prob_\mathrm{obs}^{\mathcal{S}^\ast}$. Because of Eq.~\eqref{eq:app_linear_iv_coef_identified}, we set
\begin{equation}
    \alpha_1 = \alpha^\ast, \, \beta_1 = \beta^\ast, \, \zeta_1 = \zeta^\ast.
\end{equation}

Furthermore, setting $\delta_1 = 0$ implies due to Eq.~\eqref{eq:app_linear_iv_coef_comb_identified} that
\begin{equation}
\gamma_1^2 = {\delta^\ast}^2 + {\gamma^\ast}^2.
\end{equation}
Due to Eq.~\eqref{eq:app_linear_iv_back-door_bias}, it must holds that 
\begin{equation}
\frac{\eta_1 \gamma_1}{\gamma_1^2 + (\beta_1 \alpha_1)^2} = \frac{\eta^\ast \gamma^\ast}{{\delta^\ast}^2 + {\gamma^\ast}^2 + (\beta^\ast \alpha^\ast)^2} ,
\end{equation}
which implies that
\begin{equation}
     \eta_1^2 = \frac{{\eta^\ast}^2 {\gamma^\ast}^2}{{\delta^\ast}^2 + {\gamma^\ast}^2}.
\end{equation}

Finally, due to Eq.~\eqref{eq:app_linear_iv_coef_comb_identified}, we yield
\begin{equation}
    \theta_1^2 = {\eta^\ast}^2 + {\theta^\ast}^2 - \frac{{\eta^\ast}^2 {\gamma^\ast}^2}{{\delta^\ast}^2 + {\gamma^\ast}^2},
\end{equation}
which means that every parameter of $\mathcal{S}_1$ has a unique solution in terms of parameters of $\mathcal{S}^\ast$ under the constraints of preserving the observational distribution.

\textbf{Noiseless outcome case.} We now construct $\mathcal{S}_2 = (\alpha_2, \beta_2, \gamma_2, \delta_2, \zeta_2, \eta_2, \theta_2=0)$ such that $\prob_\mathrm{obs}^{\mathcal{S}_2} = \prob_\mathrm{obs}^{\mathcal{S}^\ast}$.
Again, Eq.~\eqref{eq:app_linear_iv_coef_identified} implies that
\begin{equation}
    \alpha_2 = \alpha^\ast, \, \beta_2 = \beta^\ast, \, \zeta_2 = \zeta^\ast,
\end{equation}
and setting $\theta_2 = 0$ implies due to Eq.~\eqref{eq:app_linear_iv_coef_comb_identified} that
\begin{equation}
\eta_2^2 = {\eta^\ast}^2 + {\theta^\ast}^2.
\end{equation}

Due to Eq.~\eqref{eq:app_linear_iv_back-door_bias}, it must holds that $\frac{\eta \gamma}{\gamma^2 + \delta^2 + (\beta \alpha)^2} = \frac{\eta \gamma}{{\delta^\ast}^2 + {\gamma^\ast}^2 + (\beta^\ast \alpha^\ast)^2}$ which implies that 
\begin{equation}
    \gamma_2^2 = \frac{{\eta^\ast}^2 {\gamma^\ast}^2}{{\eta^\ast}^2 + {\theta^\ast}^2}.
\end{equation}

%$\eta \gamma = \eta^\ast \gamma^\ast$ implies that $( {\eta^\ast}^2 + {\theta^\ast}^2) \gamma^2 = {\eta^\ast}^2 {\gamma^\ast}^2$ and hence

Finally, due to Eq.~\eqref{eq:app_linear_iv_coef_comb_identified}, we have
\begin{equation}
    \delta^2 = {\eta^\ast}^2 + {\gamma^\ast}^2 - \frac{{\eta^\ast}^2 {\gamma^\ast}^2}{{\eta^\ast}^2 + {\theta^\ast}^2},
\end{equation}
which means that every parameter of $\mathcal{S}_2$ has a unique solution in terms of parameters of $\mathcal{S}^\ast$ under the constraints of preserving the observational distribution.
\end{proof}

\clearpage

\section{Implementation details}
\label{app:implementation}

\subsection{Implementation Details of data prior}
\label{app:data_prior}

\textbf{Data Prior.} (i)~For each covariate cluster $C_i$ containing latent nodes, we sample a random MLP-style graph over $\mathrm{pa}\left(C_i\right)$ by drawing biases and edge-weights from $\Pi_{C_i}$ and then pruning edges at random to ensure acyclicity. We evaluate this graph with tanh activations and noise (from normal, uniform, Laplace, or logistic distribution) to produce continuous features, then apply randomized thresholds to discretize or binarize a subset, yielding mixed-type covariates via our unstructured BNN prior. (ii)~For treatment (and outcome) clusters $C_j$ of purely observed nodes, we instantiate a second BNN $f_\theta^{(j)}$ over $\mathrm{pa}\left(C_j\right)$ (with $\theta \sim \Pi_{C_j}$ and the same acyclicity constraint). We forward-propagate the covariates through $f_\theta^{(j)}$ with injected noise to compute a scalar propensity score, then threshold to assign a binary treatment. We forward-propagate both covariates and treatment to obtain potential outcomes. The resulting treatment (and outcome) are sampled from our structured BNN prior. 

We sample covariates from a DAG-structured SCM by drawing a random MLP-like directed graph and assigning each node a bias, edge weights sampled from prior distributions. The resulting MLP-like graph is transformed into a DAG by randomly dropping edges, and structural equations with tanh activations and heterogeneous noise distributions (normal, uniform, Laplace, or logistic) generate continuous features. Then we apply a randomized feature transformation that discretizes some features and binarizes others, yielding mixed-type covariates. Next, we assign binary treatments via a separate randomly instantiated MLP and forward-propagate each covariate with injected noise to compute a propensity score. 

% \color{blue}
\textbf{Input format.} CausalFM operates as an in-context learner like other foundation models, meaning it approximates Bayesian inference by conditioning on a dataset provided in its context window. Therefore, it requires an entire dataset to make predictions for specific query samples. (1)~Input structure: The model accepts a dataset $\mathcal{D}_n=\left\{\left(x_i, a_i, y_i\right)\right\}_{i=1}^n$ acting as the context (or support set) and a query point $x_{\text {query }}$ (or a batch of query points). 
(2)~Mechanism: The transformer processes the entire sequence of observed data $\mathcal{D}_n$ using self-attention to extract context-dependent representations. It then outputs the posterior predictive distribution for the causal quantity (e.g., CATE given the context $\mathcal{D}_n$ and the specific query $x_{\text {query }}$.
(3)~Comparison to fine-tuning baselines: In practice, standard baselines require an explicit training phase on a training set before evaluation. In contrast, CausalFM takes the ``training'' data as the input context (support set) and directly generates predictions for the test data (query set) in a single forward pass.

\color{black}
\subsection{Implementation Details of our method}
\label{app:imp_model}

We encode observational data as tokens, and the embedded tokens are then processed through a transformer where attention is applied between the observations. We use transformer-based PFN as an encoder to extract a task- or context-dependent representation from input data. This representation is then passed to a Gaussian mixture model (GMM) head, which predicts the parameters of a GMM, including mixture weights, means, and standard deviations. The model outputs a mixture distribution over the target variable, and is trained end-to-end using the negative log-likelihood (NLL) of the observed targets under the predicted GMM. This enables uncertainty-aware and multi-modal predictions while leveraging the few-shot generalization capabilities of our model.

We instantiate a per-feature transformer tailored to CATE estimation. For a mini-batch with sequence length $S=S_{\text{supp}}+S_{\text{query}}$ (query set followed by support set). Confounders $X\in\mathbb{R}^{S\times B\times F_x}$, treatment $A\in\mathbb{R}^{S\times B\times F_a}$, and factual outcomes $Y\in\mathbb{R}^{S\times B\times F_y}$ are encoded as tokens. To prevent label leakage, we split at $S_{\text{supp}}=\lfloor 0.8\,S\rfloor$ and set $A_t=\mathrm{NaN}$ and $Y_t=\mathrm{NaN}$ for $t\ge S_{\text{supp}}$ (on the query set). The model thus observes $(X,A,Y)$ on support steps and learn to infer CATEs for the query set from $X$ only.

The $X$ stream uses a feature encoder, while $A$ and $Y$ pass through a NaN-indicator handler followed by feature projections. We concatenate the three streams along the token axis to obtain $H_0\in\mathbb{R}^{B\times S\times (F_g+2)\times E}$, add a feature-token positional embedding, and process $H_0$ with $L$ transformer encoder blocks (self-attention only). We pool over tokens to produce feature $Z\in\mathbb{R}^{B\times S\times E}$. After lightweight MLP maps $Z$ to a scalar, a 1D $K$-component GMM head outputs mixture parameters $(\pi,\mu,\sigma)$ via
$\pi=\mathrm{softmax}(W_\pi {z}/T),
\mu=W_\mu{z},
\sigma=\mathrm{softplus}(W_\sigma{z})+\varepsilon,$
for each ${z}\in\mathbb{R}^E$. Our training loss is the Gaussian-mixture negative log-likelihood (GMM-NLL). We thus obtain the distribution \begin{equation}
p(\tau \mid x)=\sum_{k=1}^K \pi_k(x) \mathcal{N}\left(\mu_k(x), \sigma_k^2(x)\right)
\end{equation}
And the CATE can be computed through 
\begin{equation}
\hat{\tau}(x)=\mathbb{E}[\tau \mid x]=\sum_{k=1}^K \pi_k(x) \mu_k(x)
\end{equation}

We use embedding size $E=128$, $n_{\text{heads}}=4$, feed-forward dimension $4E$, $L=10$ encoder layers, GELU activations, and feature grouping size $=1$ (per-feature tokens). For the GMM head we set $K=5$, temperature $T=1.0$, and variance floor $\varepsilon=10^{-3}$. We train with Adam (learning rate $10^{-3}$, weight decay $10^{-5}$), batch size $16$, and up to $150$ epochs. We use early stopping on validation loss. Empirically, the total training time for causalFM model is about $24$ hours on an NVIDIA A100 GPU.

We implement our CausalFM using PyTorch. Our model implementation builds upon the TabPFN architecture \citep{Hollmann.2023} from \url{https://github.com/PriorLabs/TabPFN/tree/main}.

\subsection{Implementation details of baselines}
\label{app:baseline_implementation}

For the standard CATE setting baselines, we follow the implementation from \url{https://github.com/AliciaCurth/CATENets/tree/main} for most of the CATE estimators, including S-learner \citep{kunzel2019metalearners}, T-learner \citep{kunzel2019metalearners}, TARNet \citep{shalit2017estimating}, X-leaner \citep{kunzel2019metalearners}, DR-learner \citep{kennedy2023towards}, RA-learner \citep{curth2021nonparametric}. For the foundation model baselines, we follow the author implementation from \url{https://github.com/vdblm/CausalPFN/tree/main} for CausalPFN \citep{Balazadeh.2025}; we follow the author implementation from \url{https://github.com/jr2021/Do-PFN} for DoPFN \citep{Robertson.2025}. 

For the IV setting, we follow the implementation from \url{https://github.com/DennisFrauen/MRIV-Net/tree/main/models} for the most of the IV methods, including KIV \citep{Singh.2019}, DFIV \citep{Xu.2021}, DeepIV \citep{Hartford.2017}, DeepGMM \citep{Bennett.2019}, DMLIV \citep{Syrgkanis.2019}. For each dataset and method, we evaluated 5 repetitions, each with a different random seed. All methods used the same train-test split.

\clearpage
\section{Synthetic data generation for the standard CATE estimation setting}
\label{app:syn_gen}

We construct the standard CATE estimation datasets by sampling covariates $X$, treatment $A$, and continuous outcomes $Y$. The design induces rich nonlinearity while preserving strong ignorability ($A \perp\!\!\!\perp \{Y(0),Y(1)\}\mid X$).

\subsection{Covariates via a DAG-structured SCM}
We first sample a layered directed graph (an MLP-like DAG), then evaluate a structural causal model (SCM) on its nodes and expose a random subset as observed features.

\paragraph{Graph.}
Sample number of layers $L_X$ and hidden size $H_X$ from simple discrete priors (see ``Hyperparameters'' below). Build a layered graph with $H_X$ nodes per layer and fully connect layer $\ell$ to $\ell{+}1$. Randomly drop a fraction $p_{\text{drop}}^X$ of inter-layer edges to sparsify while keeping acyclicity.

\paragraph{Node equations and noise.}
For each node $j$, sample weights $\{w_{jk}\}_{k\in\mathrm{pa}(j)}$, bias $b_j$, and an exogenous noise distribution $\varepsilon_j \sim \mathcal{D}_j$,
where $\mathcal{D}_j$ is drawn from a meta-prior over \{Normal, Uniform, Laplace, Logistic\} with a random scale.
Nodes are evaluated in topological order:
\begin{equation}
\label{eq:x-node}
s_j \;=\; \sum_{k\in\mathrm{pa}(j)} w_{jk}\, x_k \;+\; b_j \;+\; \varepsilon_j,
\qquad
x_j \;=\; \tanh(s_j),
\end{equation}
with the convention $\sum_{k\in\varnothing}(\cdot)=0$ for roots.
Let $U_X=\{\varepsilon_j\}$ denote the collection of all node noises.

\paragraph{Observed features.}
Sample a feature index set $\mathcal{F}\subseteq V$ with $|\mathcal{F}|=d$ uniformly from all graph nodes.
A single observation $X\in\mathbb{R}^d$ is obtained by re-sampling $U_X$, evaluating~\eqref{eq:x-node} over the DAG, and reading out $X = (x_j)_{j\in\mathcal{F}}$.
Each sample uses independent $U_X$.

\paragraph{Feature typing and transformations (Optional).}
Each selected feature $x_j$ is assigned a random type from \{continuous, binary, categorical\}.
Continuous features are kept in their raw form $ x_j \in (-1,1)$.
Binary features are obtained by mapping $x_j$ through a logistic function and drawing a Bernoulli sample. For categorical features, we first sample a base distribution $\pi^{0} \in \Delta^{K-1}$ over $K$ categories from a Dirichlet prior.
To make the distribution depend on the DAG value $x_j$, we introduce a fixed direction vector $v \in \mathbb{R}^K$ (normalized) and scale $\alpha>0$, and form
\begin{equation}
\pi(x_j) \;=\; \mathrm{softmax}\!\bigl(\log \pi^{0} \;+\; \alpha\, x_j\, v\bigr).
\end{equation}
The observed categorical feature is then sampled as
$X_i \sim \mathrm{Categorical}(\pi(x_j))$.

\subsection{Treatment assignment}
Given $X$, we compute a stochastic logit via a feed-forward network with layer-wise exogenous noise and then sample a Bernoulli treatment $A \sim f_A(X,U_A)$.

\paragraph{Network.}
Sample depth $L_A\!\ge\!3$ and hidden width $H_A$.
Let $h^{(0)}=X\in\mathbb{R}^d$ be the input layer. For hidden layers $\ell=1,\dots,L_A-1$,
\begin{equation}
\label{eq:a-hidden}
{s}^{(\ell)} \;=\; {W}^{(\ell)} h^{(\ell-1)} \;+\; {b}^{(\ell)} \;+\; {\varepsilon}^{(\ell)},
\qquad
h^{(\ell)} \;=\; \tanh\!\big({s}^{(\ell)}\big),
\end{equation}
and the (scalar) output logit
\begin{equation}
\label{eq:a-logit}
s_A \;=\; {w}^{\top} h^{(L_A-1)} \;+\; b \;+\; \varepsilon^{(L_A)}.
\end{equation}
We define the propensity $p=\sigma(s_A)$ and sample
\begin{equation}
\label{eq:a-bernoulli}
A \sim \mathrm{Bernoulli}(p).
\end{equation}
Let $U_A=\big({\boldsymbol{\varepsilon}^{(\ell)}}_{\ell=1}^{L_A},, U_B\big)$ collect all exogenous noises of the network and the random variable $U_B$ used for the Bernoulli sampling.

\subsection{Continuous outcome}

For each unit, we compute the potential outcomes $Y(0)$ and $Y(1)$ using the \emph{same} exogenous noise $U_Y$.

\paragraph{Network.}
Sample depth $L_Y\!\ge\!3$ and width $H_Y$; optionally drop a fraction $p_{\text{drop}}^Y$ of hidden edges to induce sparsity. For a given treatment level $a\in\{0,1\}$, the input is $X$ and $A$, then for hidden layers
\begin{equation}
\label{eq:y-hidden}
{t}^{(\ell)}(a) \;=\; {V}^{(\ell)} h^{(\ell-1)}(a) \;+\; {c}^{(\ell)} \;+\; {\xi}^{(\ell)},
\qquad
h^{(\ell)}(a) \;=\; \tanh\!\big({t}^{(\ell)}(a)\big),
\end{equation}
with $h^{(0)}(a)=[X, a]$, and the scalar output logit
\begin{equation}
\label{eq:y-logit}
Y(a) \;=\; {v}^{\top} h^{(L_Y-1)}(a) \;+\; c \;+\; \xi^{(L_Y)}.
\end{equation}
The factual outcome is
\begin{equation}
\label{eq:y-factual}
Y=A Y(1)+\left(1-A\right) Y(0)
\end{equation}
Let $U_Y=\{{\xi}^{(\ell)}\}_{\ell=1}^{L_Y}$ denote outcome-network noises; \emph{the same $U_Y$ is reused when constructing $Y(0)$ and $Y(1)$ for the same unit}.

\vspace{0.5em}
\subsection{Independence and identification}
All exogenous noises are sampled independently across mechanisms and samples:
$U_X \perp U_A \perp U_Y$ and i.i.d.\ across units.
Hence strong ignorability holds:
\begin{equation}
A \perp\!\!\!\perp \{Y(0),Y(1)\}\mid X,
\qquad
0 < \Pr(A{=}1\mid X) < 1,
\end{equation}
with overlap ensured by the sigmoid in~\eqref{eq:a-logit} to \eqref{eq:a-bernoulli}.

\subsection{Hyperparameters and priors (as used in our code)}
We use simple, reproducible priors for architecture, weights, and noises:
\begin{itemize}
\item \textbf{Covariate DAG:} $L_X \sim \mathrm{Unif}\{3,4,5,6\}$, $H_X \sim \mathrm{Unif}\{15,\dots,40\}$, edge-drop $p_{\text{drop}}^X{=}0.5$.
\item \textbf{Treatment net:} $L_A \sim \mathrm{Unif}\{3,4\}$, $H_A \sim \mathrm{Unif}\{8,\dots,20\}$.
\item \textbf{Outcome net:} $L_Y \sim \mathrm{Unif}\{3,4,5\}$, $H_Y \sim \mathrm{Unif}\{10,\dots,25\}$, edge-drop $p_{\text{drop}}^Y{=}0.4$.
\item \textbf{Weights/biases:} i.i.d.\ $w,b \sim \mathcal{N}(0,\sigma_w^2)$ with task-specific $\sigma_w$.
\item \textbf{Node noises:} for each node, draw a type in \{Normal, Uniform, Laplace, Logistic\} and a scale from a wide range; sample fresh noises per unit and layer as in~\eqref{eq:x-node}, \eqref{eq:a-hidden}--\eqref{eq:a-logit}, \eqref{eq:y-hidden}--\eqref{eq:y-logit}.
\item \textbf{Activation:} $\tanh$ for all hidden layers; output layers are linear (logits).
\item \textbf{Features observed:} choose $\mathcal{F}$ uniformly at random from all DAG nodes, $|\mathcal{F}|=d$.
\end{itemize}

\subsection{Generation pipeline}
For each dataset:
\begin{enumerate}
\item Sample the covariate DAG, parameters, and noises; for each unit, evaluate the DAG in topological order to obtain $X$ by reading nodes in $\mathcal{F}$.
\item Given $X$, construct the treatment network with $U_A$ to get $p$ and sample $A \sim \mathrm{Bernoulli}(p)$.
\item For outcomes, sample $U_Y$ once per unit and use it to compute $Y(0)$ and $Y(1)$ via~\eqref{eq:y-logit}.
\end{enumerate}

\subsection{Synthetic datasets size}

We sample $10000$ synthetic training datasets from data prior with different data generation mechanism. Each training datasets contain $1024$ data samples. The feature dimensions are also different across the datasets, ranging from $10$ to $100$. The features are mixed data type with continuous, binary and categorical. 

\clearpage
\section{Synthetic data generation for the instrumental variables (IV) setting}
\label{app:iv_gen}

We aim at estimating CATEs from observational data under unobserved confounding using IVs. In contrast to the standard CATE setting, where strong unconfoundedness holds, our IV datasets intentionally violate unconfoundedness by introducing an unobserved confounder $U$ that affects both treatment $A$ and outcome $Y$. Identification is instead driven by an instrument $Z$ that (i) is relevant for $A$, (ii) has no direct path to $Y$ beyond $A$ (exclusion), and (iii) is conditionally independent of $U$ given $X$.

\paragraph{Key differences vs.\ standard CATE.}
(i) \emph{Ignorability is broken}: $A \not\!\perp\!\!\!\perp \{Y(0),Y(1)\}\mid X$ due to $U\to A$ and $U\to Y$. 
(ii) We introduce an \emph{instrument $Z$} with $Z\perp U\mid X$, $Z\not\!\perp\!\!\!\perp A\mid X$, and no $Z\to Y$ edge (exclusion).
(iii) Outcomes are generated via an \emph{additive} structural form $Y=f(X,A)+g(X,U)+\varepsilon_Y$ with $f$ and $g$ deterministic neural networks; the same $\varepsilon_Y$ is reused across $Y(0)$ and $Y(1)$ for a unit to ensure counterfactual consistency.

\subsection{Covariates and latent confounders via a DAG-structured SCM}

We reuse the DAG-SCM from the standard setting to produce a wide set of base variables $W$, then split it into observed covariates $X$ and unobserved confounders $U$. Thus we have different strength of the unobserved confounders from weak to sufficiently strong.

\paragraph{Graph and node equations.}
Sample number of layers $L_X$ and hidden size $H_X$, build a layered DAG (fully connect layer $\ell$ to $\ell{+}1$), and drop a fraction $p_{\text{drop}}^X$ of inter-layer edges to sparsify. For each node $j$, sample weights $\{w_{jk}\}_{k\in\mathrm{pa}(j)}$, bias $b_j$, and a node-specific exogenous noise $\varepsilon_j\sim \mathcal{D}_j$ (type and scale drawn once per node). Evaluate in topological order
\begin{equation}
\label{eq:iv-x-node}
s_j \;=\; \sum_{k\in\mathrm{pa}(j)} w_{jk}\, v_k \;+\; b_j \;+\; \varepsilon_j,
\qquad
v_j \;=\; \tanh(s_j).
\end{equation}
Draw a feature index set for $W=(v_j)$ with $|W|=d_X+d_U^{\max}$, and then sample the actual confounder dimension $d_U\in\{2,\dots,5\}$ uniformly. Split $U \in \mathbb{R}^{d_U} \;\;\text{from the first $d_U$ coordinates of $W$}, X \in \mathbb{R}^{d_X} \;\;\text{from the next $d_X$ coordinates}$. Node noises $\{\varepsilon_j\}$ are drawn independently per unit.

\subsection{Instrument variable}

We generate $Z$ from $X$ only, ensuring $Z\perp U\mid X$ by construction and precluding any direct $U\to Z$ path. Let $\phi_Z$ be a feed-forward network with input $X$ and no layer-wise exogenous noise; the network parameters are sampled once per dataset and then fixed. For a unit with covariates $X$,
\begin{equation}
\label{eq:iv-z-logit}
s_Z \;=\; \phi_Z(X),
\qquad
Z \;=\;
\begin{cases}
\mathrm{Bernoulli} \!\big(\sigma(s_Z)\big), & \text{binary instrument},\\[2pt]
s_Z, & \text{continuous instrument}.
\end{cases}
\end{equation}
We randomly choose between the binary and continuous variants when creating datasets. Relevance is induced via the $Z\to A$ path in the treatment mechanism below.

Note that the instrument variable $Z$ has a direct influence on the treatment $A$, but does not have a direct effect on the outcome $Y$.

\subsection{Treatment variable}
Given $(X,Z,U)$, treatment is generated via a deterministic network $\phi_A$ followed by a Bernoulli draw. There is \emph{no} layer-wise noise inside $\phi_A$; the only randomness is the terminal Bernoulli. For a unit,
\begin{equation}
\label{eq:iv-a-logit}
s_A \;=\; \phi_A\big([X;Z;U]\big),
\qquad
p \;=\; \sigma(s_A),
\qquad
A \sim \mathrm{Bernoulli}(p).
\end{equation}
This introduces $U\to A$ and hence breaks ignorability, while maintaining $Z\perp U\mid X$ and $Z\to A$ relevance.

\subsection{Outcome variables}
The instrument variable $Z$ has no direct effect on the outcomes. Outcomes are generated additively from a treatment channel $f$ and a confounding channel $g$, both deterministic MLPs with inputs $[X;A]$ and $[X;U]$, respectively. Let $\varepsilon_Y\sim\mathcal{N}(0,\sigma_Y^2)$ be an i.i.d.\ scalar noise drawn once per unit,
\begin{equation}
\label{eq:iv-y}
Y(a) \;=\; f\big(X,a\big) \;+\; g(X,U) \;+\; \varepsilon_Y,
\qquad a\in\{0,1\},
\end{equation}
\begin{equation}
\label{eq:iv-y-factual}
Y \;=\; A\,Y(1) + (1-A)\,Y(0).
\end{equation}
By construction there is no $Z\to Y$ edge (exclusion), since $Z$ influences $Y$ only through $A$.

\subsection{Independence and identification (IV)}
All exogenous noises are sampled independently across units and mechanisms. The IV conditions hold by construction,
\begin{align}
\text{(Independence)}\quad & Z \perp U \mid X,\\
\text{(Exclusion)}\quad & Y(a) \;\text{depends on $X,a,U$ and $\varepsilon_Y$ only (no $Z$)},\\
\text{(Relevance)}\quad & Z \not\!\perp\!\!\!\perp A \mid X.
\end{align}

\subsection{Hyperparameters and priors}
We use simple priors mirroring our implementation:
\begin{itemize}
\item \textbf{DAG-SCM for $(X,U)$:} $L_X \sim \mathrm{Unif}\{2,3,4,5\}$,\; $H_X \sim \mathrm{Unif}\{10,\dots,50\}$,\; edge-drop $p_{\text{drop}}^X{=}0.4$; node noises $\varepsilon_j$ draw a type in \{Normal, Uniform, Laplace, Logistic\} with random scale.
\item \textbf{Instrument net $\phi_Z$:} depth $L_Z\!\ge\!3$,\; width $H_Z \sim \mathrm{Unif}\{8,\dots,30\}$; output is either Bernoulli with $\sigma(s_Z)$ (binary $Z$) or real-valued $s_Z$ (continuous $Z$); no layer-wise noise.
\item \textbf{Treatment net $\phi_A$:} depth $L_A\!\ge\!3$,\; width $H_A \sim \mathrm{Unif}\{8,\dots,30\}$; \emph{no} layer-wise noise; $A\sim\mathrm{Bernoulli}(\sigma(s_A))$.
\item \textbf{Outcome nets $f,g$:} depths $L_f,L_g \sim \mathrm{Unif}\{3,\dots,6\}$,\; widths $H_f,H_g \sim \mathrm{Unif}\{10,\dots,25\}$; $\varepsilon_Y\sim\mathcal{N}(0,\sigma_Y^2)$ with $\sigma_Y=0.5$ by default.
\item \textbf{Weights/biases:} i.i.d.\ $w,b \sim \mathcal{N}(0,1)$ sampled once per dataset; $\tanh$ activations.
\item \textbf{Strength:} $d_U \sim \mathrm{Unif}\{2,\dots,5\}$.
\end{itemize}

\subsection{Generation pipeline (IV)}
For each dataset, we execute:
\begin{enumerate}
\item Sample the covariate DAG and parameters; for each unit, evaluate~\eqref{eq:iv-x-node} to obtain a wide matrix then split it into $(U,X)$.
\item Given $X$, compute the instrument $Z$ via~\eqref{eq:iv-z-logit} (binary or continuous and mixed).
\item Given $(X,Z,U)$, compute the treatment propensity $p=\sigma(s_A)$ via~\eqref{eq:iv-a-logit} and sample $A\sim\mathrm{Bernoulli}(p)$.
\item Draw a single $\varepsilon_Y$ per unit and compute $Y(0),Y(1)$ using~\eqref{eq:iv-y}; set the factual outcome by~\eqref{eq:iv-y-factual}.
\end{enumerate}
This yields datasets matching the classical IV graph and enabling evaluation of IV estimators.

\clearpage
\section{Synthetic data generation for the Front-door–adjusted setting}
\label{app:frontdoor_gen}

\subsection{Front-door adjustment datasets}
We next construct datasets satisfying the \emph{front-door} criterion. Besides covariates $X$, treatment $A$, and continuous outcomes $Y$, we introduce a mediator $M$. The design ensures that $A$ affects $Y$ only through $M$ (no direct $A\!\to\!Y$ path), $U$ (unobserved) confounds $A$ and $Y$ but does not affect $M$. 

\paragraph{Covariates via a DAG-structured SCM.}
Identical to the standard setting: we sample a layered DAG, draw node-wise weights/biases/noise, evaluate in topological order as in~\eqref{eq:x-node}, and expose $d$ node values as observed features $X\in\mathbb{R}^d$. Independent exogenous noises $U_X{=}\{\varepsilon_j\}$ are re-sampled per unit.

\paragraph{Latent confounders.}
From the same SCM evaluation we also retain $q$ additional node values as unobserved confounders $U\in\mathbb{R}^q$ (not revealed to learners). These induce confounding between $A$ and $Y$.

\subsubsection{Treatment assignment with latent confounding}
Given $(X,U)$, we sample a feed-forward network and generate treatment. Let $L_A\!\ge\!3$ and $H_A$ be the depth and width, respectively. With $h^{(0)}=[X,U]$,
\begin{equation}
\label{eq:fd-a-hidden}
s_A^{(\ell)} \;=\; W_A^{(\ell)} h^{(\ell-1)} \;+\; b_A^{(\ell)},
\quad
h^{(\ell)} \;=\; \tanh\!\big(s_A^{(\ell)}\big),\;\; \ell=1,\dots,L_A-1,
\end{equation}
and scalar logit
\begin{equation}
\label{eq:fd-a-logit}
\tilde{s}_A \;=\; w_A^\top h^{(L_A-1)} \;+\; b_A , 
\qquad
p \;=\; \sigma(\tilde{s}_A), 
\qquad
A \sim \mathrm{Bernoulli}(p).
\end{equation}

\subsubsection{Mediator mechanism }
The mediator is generated from $(X,A)$ only, thereby enforcing the front-door exclusion $U\nrightarrow M$.
Let $L_M\!\ge\!3$, $H_M$ be depth and width, with input $g^{(0)}=[X,A]$,
\begin{equation}
\label{eq:fd-m-hidden}
r^{(\ell)} \;=\; W_M^{(\ell)} g^{(\ell-1)} \;+\; b_M^{(\ell)} \;+\; \varepsilon_M^{(\ell)},
\qquad
g^{(\ell)} \;=\; \tanh\!\big(r^{(\ell)}\big),\;\; \ell=1,\dots,L_M-1,
\end{equation}
and scalar output
\begin{equation}
\label{eq:fd-m-out}
M \;=\; w_M^\top g^{(L_M-1)} \;+\; b_M \;+\; \varepsilon_M^{(L_M)}.
\end{equation}
We denote $U_M=\{\varepsilon_M^{(\ell)}\}_{\ell=1}^{L_M}$.

\subsubsection{Outcome variable}
Outcomes are constructed to satisfy $A\to M\to Y$ as the \emph{only} causal path from $A$ to $Y$, while allowing $U\to Y$ and $X\to Y$.
We decompose $Y$ into an $M$-path component and a confounding component:
\begin{align}
\label{eq:fd-y-r}
\text{\emph{Mediator path:}}\quad 
& r^{(\ell)}_Y \;=\; V^{(\ell)}[h^{(\ell-1)}] \;+\; c^{(\ell)} \;+\; \xi^{(\ell)},\;\; 
h^{(0)}=[X,M],\;\; h^{(\ell)}=\tanh(r^{(\ell)}_Y), \nonumber\\[-0.25em]
& R(X,M) \;=\; v^\top h^{(L_Y-1)} \;+\; c \;+\; \xi^{(L_Y)},\\[0.25em]
\label{eq:fd-y-g}
\text{\emph{Confounding path:}}\quad 
& G(X,U) \;=\; \tilde{v}^\top \tilde{h}^{(L_{\!G}-1)} \;+\; \tilde{c} \;+\; \tilde{\xi}^{(L_{\!G})}, \quad \tilde{h}^{(0)}=[X,U],\ \tilde{h}^{(\ell)}=\tanh(\cdot),
\end{align}
and define the potential outcomes
\begin{equation}
\label{eq:fd-y-po}
Y(a) \;=\; R\bigl(X,\, M(a)\bigr) \;+\; G(X,U) \;+\; \epsilon_Y,
\qquad 
M(a) \text{ computed from~\eqref{eq:fd-m-hidden}--\eqref{eq:fd-m-out} with $A{=}a$.}
\end{equation}
The factual outcome is $Y=A\,Y(1){+}(1{-}A)\,Y(0)$. 
By construction there is no direct $A\!\to\!Y$ edge; $A$ influences $Y$ solely via $M$.

\subsubsection{Hyperparameters and priors}
\begin{itemize}
\item \textbf{Covariate DAG:} $L_X \sim \mathrm{Unif}\{3,4,5,6\}$, $H_X \sim \mathrm{Unif}\{15,\dots,40\}$, edge-drop $p_{\text{drop}}^X{=}0.5$; node noises drawn per-node from \{Normal, Uniform, Laplace, Logistic\} with random scale.
\item \textbf{Treatment net} (Eq.~\eqref{eq:fd-a-hidden}--\eqref{eq:fd-a-logit}): $L_A \sim \mathrm{Unif}\{3,4\}$, $H_A \sim \mathrm{Unif}\{8,\dots,20\}$.
\item \textbf{Mediator net} (Eq.~\eqref{eq:fd-m-hidden}--\eqref{eq:fd-m-out}): $L_M \sim \mathrm{Unif}\{3,4\}$, $H_M \sim \mathrm{Unif}\{8,\dots,20\}$.
\item \textbf{Outcome nets} (Eq.~\eqref{eq:fd-y-r}--\eqref{eq:fd-y-po}): $L_Y,L_{\!G} \sim \mathrm{Unif}\{3,4,5\}$, widths $\sim \mathrm{Unif}\{10,\dots,25\}$; additive Gaussian $\epsilon_Y$ with task-specific scale.
\item \textbf{Weights/biases:} i.i.d.\ $\mathcal{N}(0,\sigma_w^2)$; $\tanh$ nonlinearity.
\end{itemize}

\vspace{0.25em}
\subsubsection{Generation pipeline}
For each dataset:
\begin{enumerate}
\item Sample the covariate DAG and evaluate to obtain $(X,U)$ (observed $X$, hidden $U$).
\item Compute $p(A{=}1\mid X,U)$ via~\eqref{eq:fd-a-hidden}--\eqref{eq:fd-a-logit} and sample $A$.
\item Evaluate the mediator $M$ from $(X,A)$ using~\eqref{eq:fd-m-hidden}--\eqref{eq:fd-m-out}.
\item Sample $U_Y$ once per unit and compute $Y(0)$ and $Y(1)$ via~\eqref{eq:fd-y-po} by first obtaining $M(0)$ and $M(1)$ from the mediator net; set $Y=A\,Y(1){+}(1{-}A)\,Y(0)$.
\end{enumerate}

\clearpage
\section{Additional experiments}
\label{app:add_exps}

\subsection{Evaluation in the front-door adjustment setting}
\label{app:frontdoor}

\subsubsection{Baselines for front-door adjustment setting} 

In contrast to the standard CATE or IV settings, there are few established baselines for the front-door case. Identification in this setting is enabled through Pearl’s front-door formula~\citep{Pearl.2009}. The natural baseline is therefore the \textbf{plug-in front-door learner}, which estimates the necessary nuisance components, i.e., $P(M \mid A, X)$, $P(A \mid X)$, and $\mathbb{E}[Y \mid M, X]$ and substitutes them into the identification formula to recover causal quantities. To assess the role of model flexibility in estimating these nuisance functions, we implement the plug-in learner with different regression methods, including linear regression, Random Forests, and neural networks. 

\subsubsection{Results for front-door adjustment setting} 

Table~\ref{tab:front_door} reports the averaged PEHE across datasets. We observe that CausalFM achieves competitive CATE estimation. Importantly, these results hold \emph{without requiring model retraining} for our model, demonstrating the adaptability of our approach to the front-door setting.

\subsection{Additional results on the standard CATE estimation}

We report the detailed standard CATE estimation on 10 synthetic datasets in Table~\ref{tab:results_cate_full}. We show our method gives the best estimation on most of the datasets. 

\begin{table}[ht]
\centering
\caption{Standard CATE estimation on 10 synthetic datasets.}
\label{tab:results_cate_full}
\resizebox{\linewidth}{!}{%
\begin{tabular}{lcccccccccc}
\toprule
{Method} & $\mathrm{D}_1$ & $\mathrm{D}_2$ & $\mathrm{D}_3$ & $\mathrm{D}_4$ & $\mathrm{D}_5$ & $\mathrm{D}_6$ & $\mathrm{D}_7$ & $\mathrm{D}_8$ & $\mathrm{D}_9$ & $\mathrm{D}_{10}$ \\
\midrule
\addlinespace[2pt]
\rowcolor{gray!20}
\multicolumn{11}{l}{{BASELINES (A): STANDARD CATE ESTIMATORS}}\\
\cmidrule(lr){1-11}
S-learner \citep{kunzel2019metalearners} & 0.725 & 0.583 & 0.752 & 0.829 & 0.614 & 0.892 & 0.858 & 0.421 & 0.680 & 0.985 \\
T-learner \citep{kunzel2019metalearners} & 0.652 & 0.496 & 0.666 & 0.746 & 0.552 & 0.849 & 0.761 & 0.357 & 0.608 & 0.931 \\
TARNet \citep{shalit2017estimating} & 0.769 & 0.779 & 0.817 & 0.984 & 0.640 & 0.938 & 1.405 & 0.505 & 0.736 & 0.968 \\
RA-learner \citep{curth2021nonparametric} & 0.620 & 0.421 & 0.644 & 0.706 & 0.523 & 0.808 & 0.646 & 0.353 & 0.613 & 0.759 \\
X-learner \citep{kunzel2019metalearners} & 0.574 & 0.400 & 0.614 & 0.634 & 0.381 & 0.713 & 0.686 & 0.302 & 0.549 & 0.779 \\
DR-learner \citep{kennedy2023towards} & 0.783 & 0.533 & 0.767 & 0.947 & 0.867 & 0.882 & 0.791 & 0.4230 & 0.653 & 0.998 \\
\midrule
\addlinespace[4pt]
\rowcolor{gray!20}
\multicolumn{11}{l}{{BASELINES (B): FOUNDATION MODELS-BASED METHODS}}\\
\cmidrule(lr){1-11}
% \rowcolor{gray!20}
% TabPFN \citep{Hollmann.2023} & 0.766 & 0.362 & 0.369 & 0.383 & 0.558 & 0.898 & 0.614 & 0.825 & 0.528 & 0.239  \\
% \rowcolor{gray!20}
CausalPFN \citep{Balazadeh.2025} & 0.493 & 0.489 & 0.585 & 0.743 & 0.413 & 0.615 & 0.950 & 0.288 & 0.453 & 0.544 \\
% \rowcolor{gray!20}
DoPFN \citep{Robertson.2025} & 0.417 & 0.313 & 0.228 & 0.679 & 0.591 & 0.475 & 0.497 & 0.551 & 0.610 & 0.827 \\
% \addlinespace[2pt]
\cmidrule(lr){1-11}
\textbf{CausalFM (ours)} & 0.454 & 0.487 & 0.515 & 0.677 & 0.204 & 0.618 & 0.950 & 0.278 & 0.442 & 0.532 \\
\bottomrule
\multicolumn{11}{l}{Reported: PEHE (Lower = better, best in bold).}
\end{tabular}%
}
\end{table}

\begin{wraptable}{r}{0.5\textwidth} \vspace{-0.5cm}
    \centering
    % \vspace{-2cm}
    \caption{Standard CATE estimation on ACIC2016 datasets. Reported: PEHE (mean $\pm$ std.)}
    \vspace{-0.3cm}
    \label{tab:results_acic2016}
    \scalebox{0.65}{
    \begin{tabular}{lc}
    \toprule
    {Method} & {PEHE} \\
    \midrule
    \addlinespace[2pt]
     \rowcolor{gray!20}
    \multicolumn{2}{l}{{BASELINES (A): STANDARD CATE ESTIMATORS}}\\
    \cmidrule(lr){1-2}
    S-learner \citep{kunzel2019metalearners} & 1.191 $\pm$ {\tiny 0.15} \\
    T-learner \citep{kunzel2019metalearners} & 1.143 $\pm$ {\tiny 0.14} \\
    TARNet \citep{shalit2017estimating} & 0.934 $\pm$ {\tiny 0.15} \\
    RA-learner \citep{curth2021nonparametric} & 0.762 $\pm$ {\tiny 0.14} \\
    X-learner \citep{kunzel2019metalearners} & 0.519 $\pm$ {\tiny 0.16} \\
    DR-learner \citep{kennedy2023towards} & 1.485 $\pm$ {\tiny 0.18} \\
    \midrule
    \addlinespace[4pt]
     \rowcolor{gray!20}
    \multicolumn{2}{l}{{BASELINES (B): FOUNDATION MODELS-BASED METHODS}}\\
    \cmidrule(lr){1-2}
    % TabPFN \citep{Hollmann.2023} & 0.173 $\pm$ {\tiny 0.20} \\
    CausalPFN \citep{Balazadeh.2025} & 0.239 $\pm$ {\tiny 0.11} \\
    DoPFN \citep{Robertson.2025} & 0.857 $\pm$ {\tiny 0.36} \\
    \cmidrule(lr){1-2}
    \textbf{CausalFM (ours)} & 0.638 $\pm$ {\tiny 0.32} \\
    \bottomrule
    \multicolumn{2}{l}{Lower = better (best in bold)}
    \end{tabular}
    } % End of scalebox
    \vspace{-1cm}
\end{wraptable}
\subsection{Results on other datasets}

In the following, we present detailed results of the experiments with ACIC 2016 datasets. We follow CausalPFN \citet{Balazadeh.2025} obtaining data from \url{https://github.com/BiomedSciAI/causallib/tree/master/causallib/datasets/data/acic_challenge_2016} to evaluate on 10 different datasets with various data generation mechanism. The treatment and outcome were simulated from real-world data corresponding to 4802 individuals and 58 covariates. Table~\ref{tab:results_acic2016} shows the results of the CATE estimation.

\vspace{2cm}
\subsection{Additional results for the IV setting}

\begin{table}[ht]
\centering
\caption{IV setting for CATE estimation with binary instrument variable reported with PEHE. Results for benchmarking model performance across 10 different datasets under various confounding strength.}
\label{tab:binary_iv_full}
\resizebox{0.9\textwidth}{!}{%
\begin{tabular}{lcccccccccc}
\toprule
{Method} & $\mathrm{D}_1$ & $\mathrm{D}_2$ & $\mathrm{D}_3$ & $\mathrm{D}_4$ & $\mathrm{D}_5$ & $\mathrm{D}_6$ & $\mathrm{D}_7$ & $\mathrm{D}_8$ & $\mathrm{D}_9$ & $\mathrm{D}_{10}$ \\
\midrule
    \rowcolor{gray!20}
    \multicolumn{11}{l}{{BASELINES (A): STANDARD CATE ESTIMATORS}}\\
    \cmidrule(lr){1-11}
TARNet \citep{shalit2017estimating} & 0.789 & 0.790 & 0.799 & 0.789 & 0.831 & 0.673 & 0.582 & 0.978 & 0.735 & 0.642 \\
DR-learner \citep{kennedy2023towards} & 1.517 & 1.071 & 1.022 & 0.901 & 0.754 & 0.676 & 0.646 & 1.009 & 0.664 & 0.781 \\
\midrule
    \rowcolor{gray!20}
    \multicolumn{11}{l}{{BASELINES (B): STANDARD IV ESTIMATORS}}\\
    \cmidrule(lr){1-11}
KIV \citep{Singh.2019} & 0.660 & 0.344 & 0.340 & 0.394 & 0.544 & 0.460 & 0.299 & 0.731 & 0.532 & 0.241  \\
DFIV \citep{Xu.2021}  & 0.654 & 0.245 & 1.022 & 0.459 & 1.145 & 0.770 & 0.741 & 0.366 & 0.971 & 0.717  \\
DeepIV \citep{Hartford.2017} & 0.614 & 0.300 & 0.310 & 0.372 & 0.514 & 0.404 & 0.309 & 0.706 & 0.510 & 0.235 \\
DeepGMM \citep{Bennett.2019} & 0.704 & 0.403 & 0.440 & 0.599 & 0.569 & 0.486 & 0.292 & 0.737 & 0.566 & 0.232 \\
DMLIV \citep{Syrgkanis.2019} & 0.712 & 0.379 & 0.361 & 0.433 & 0.548 & 0.450 & 0.293 & 0.722 & 0.549 & 0.344 \\
DRIV \citep{Syrgkanis.2019} & 0.869 & 0.470 & 0.353 & 0.368 & 0.565 & 0.448 & 0.272 & 0.715 & 0.587 & 0.667\\
MRIV \citep{frauen2022estimating} & 0.759 & 0.632 & 0.698 & 1.011 & 0.348 & 0.860 & 0.929 & 0.707 & 0.562 & 0.380 \\
\midrule
    \rowcolor{gray!20}
    \multicolumn{11}{l}{{BASELINES (C): FOUNDATION MODEL-BASED}}\\
    \cmidrule(lr){1-11}
DoPFN \citep{Robertson.2025} & 0.776 & 0.265 & 0.370 & 0.382 & 0.552 & 0.819 & 0.499 & 0.794 & 0.534 & 0.242 \\
\midrule
CausalFM (ours) & 0.586 & 0.224 & 0.374 & 0.310 & 0.543 & 0.464 & 0.250 & 0.701 & 0.553 & 0.217 \\
\bottomrule
\multicolumn{11}{l}{Reported: PEHE (mean $\pm$ standard deviation.) Lower = better (best in bold).}
\end{tabular}%
}
\end{table}

\begin{table}[ht]
\centering
\caption{IV setting for CATE estimation with continuous instrument variable reported with PEHE. Results for benchmarking model performance across 10 different datasets under various confounding strength.}
\label{tab:conti_iv_full}
\resizebox{0.95\textwidth}{!}{%
\begin{tabular}{lcccccccccc}
\toprule
{Method} & $\mathrm{D}_1$ & $\mathrm{D}_2$ & $\mathrm{D}_3$ & $\mathrm{D}_4$ & $\mathrm{D}_5$ & $\mathrm{D}_6$ & $\mathrm{D}_7$ & $\mathrm{D}_8$ & $\mathrm{D}_9$ & $\mathrm{D}_{10}$ \\
\midrule
    \rowcolor{gray!20}
    \multicolumn{11}{l}{{BASELINES (A): STANDARD CATE ESTIMATORS}}\\
    \cmidrule(lr){1-11}
TARNet \citep{shalit2017estimating} & 0.943 & 0.825 & 1.025 & 0.458 & 1.007 & 1.316 & 0.848 & 1.004 & 0.825 & 0.884\\
DR-learner \citep{kennedy2023towards} & 1.038 & 1.055 & 0.946 & 0.533 & 0.955 & 1.071 & 1.109 & 1.502 & 1.258 & 0.888 \\
\midrule
    \rowcolor{gray!20}
    \multicolumn{11}{l}{{BASELINES (B): STANDARD IV ESTIMATORS}}\\
    \cmidrule(lr){1-11}
KIV \citep{Singh.2019} & 0.509 & 0.567 & 0.699 & 0.178 & 0.533 & 0.948 & 0.420 & 0.811 & 0.602 & 0.506\\
DFIV \citep{Xu.2021} & 0.526 & 0.574 & 0.691 & 0.171 & 0.532 & 0.991 & 0.428 & 0.800 & 0.609 & 0.506 \\
DeepIV \citep{Hartford.2017} & 0.484 & 0.539 & 0.664 & 0.169 & 0.506 & 0.901 & 0.399 & 0.770 & 0.572 & 0.481 \\
DeepGMM \citep{Bennett.2019} & 0.543 & 0.581 & 0.682 & 0.165 & 0.532 & 1.035 & 0.437 & 0.789 & 0.615 & 0.505  \\
DMLIV \citep{Syrgkanis.2019} & 0.518 & 0.642 & 0.701 & 0.181 & 0.600 & 1.009 & 0.574 & 0.813 & 0.611 & 0.537 \\
DRIV \citep{Syrgkanis.2019} & 0.633 & 0.705 & 0.870 & 0.279 & 0.663 & 0.873 & 0.523 & 1.009 & 0.749 & 0.630 \\
MRIV \citep{frauen2022estimating} & 0.579 & 0.631 & 0.760 & 0.189 & 0.586 & 1.091 & 0.471 & 0.880 & 0.669 & 0.556 \\
\midrule
    \rowcolor{gray!20}
    \multicolumn{11}{l}{{BASELINES (C): FOUNDATION MODEL-BASED}}\\
    \cmidrule(lr){1-11}
DoPFN \citep{Robertson.2025} & 0.471 & 0.528 & 0.787 & 0.322 & 0.649 & 1.723 & 0.416 & 0.588 & 0.722 & 0.545 \\
\midrule
CausalFM (ours) & 0.515 & 0.600 & 0.704 & 0.152 & 0.538 & 0.934 & 0.414 & 0.826 & 0.600 & 0.509 \\
\bottomrule
\multicolumn{11}{l}{Reported: PEHE (mean $\pm$ standard deviation.) Lower = better (best in bold).}
\end{tabular}%
}
\end{table}

% \color{blue}
\subsection{Additional results for misspecification and prior design choices}
% \label{app:prior_choice}

\begin{wrapfigure}{r}{0.35\textwidth}
  \centering
  \vspace{-5mm}
  \includegraphics[width=1\linewidth]{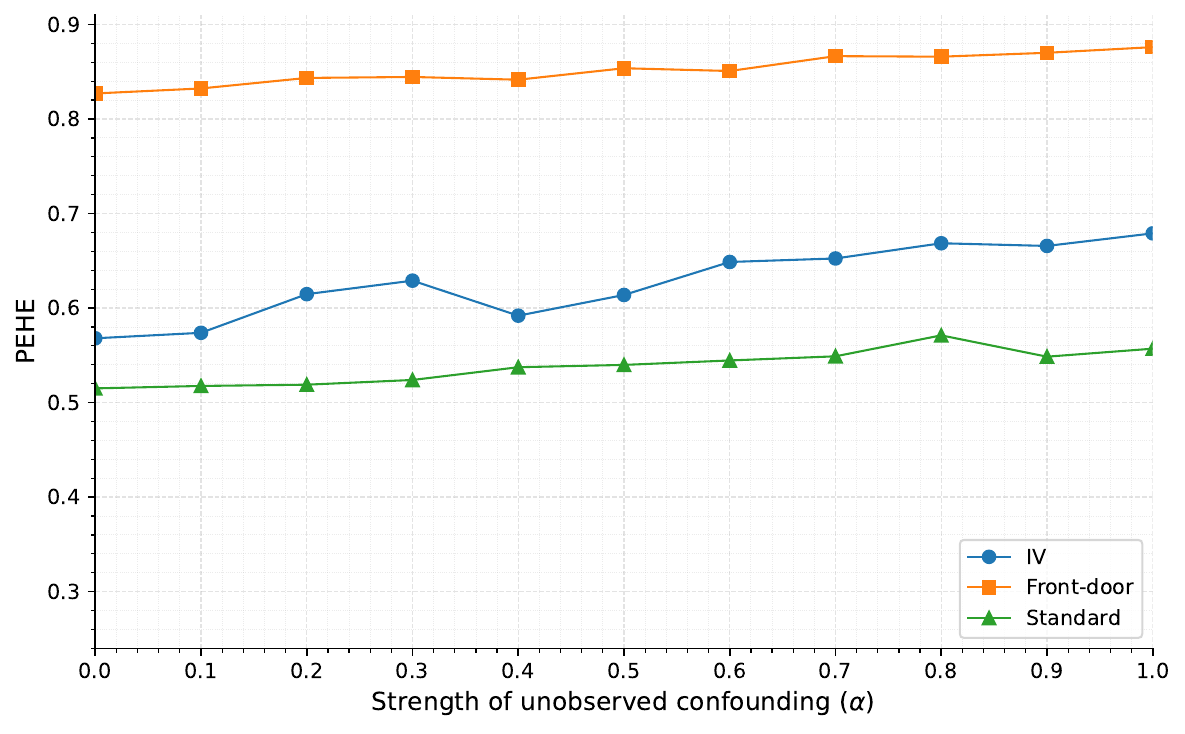}
  \caption{Robustness of our model to difference choices of the prior.}
  \label{fig:robust}
%  \vspace{-8mm}
\end{wrapfigure}

\textbf{Misspecification of causal settings.} We conduct experiments to study the sensitivity to using an incorrect identifiability strategy. Specifically, we generate data from (i) an IV SCM and (ii) a front-door SCM (as in our main experiments), and compare a model trained under the correct identifiability design (IV or front-door, respectively) with a model trained under an incorrect back-door design. The results are in Table~\ref{tab:wrong_identifiability}. As expected, using a misspecified identifiability strategy consistently worsens the PEHE. This highlights the importance of including the correct identifiability assumption into the prior specification, which we propose for CausalFM.

\begin{wraptable}{r}{0.5\textwidth}
\vspace{-0.5cm}
\centering
\caption{
Analysis of the effect of misspecified identifiability strategies.
}
\vspace{-0.2cm}
\label{tab:wrong_identifiability}
\resizebox{0.5\textwidth}{!}{
\begin{tabular}{lccc}
\toprule
{Data Generating SCM} & {Strategy} & {Identifiability Used} & {PEHE} \\
\midrule
\multirow{2}{*}{IV SCM} 
    & Correct   & IV           & 0.422 \\
    & Incorrect & Back-door    & 0.489 \\
\midrule
\multirow{2}{*}{Front-door SCM} 
    & Correct   & Front-door   & 0.847 \\
    & Incorrect & Back-door    & 0.876 \\
\bottomrule
\end{tabular}
}
\vspace{-10pt}
\end{wraptable}

\textbf{Prior design choices.} We also analyze the robustness of our model to different choices of the prior. In particular, we vary the strength of unobserved confounding in the data-generating process, controlled by a parameter $\alpha \in [0, 1]$. The results in Fig.~\ref{fig:robust} show that our model remains robust as $\alpha$ increases. This again confirms the strong performance of CausalFM.

\subsection{Computational time}
We report computation time in this section. For our method and other foundation-model-based approaches, we show inference time since these models do not need fine-tuning after pretraining. For standard baselines, which must be trained for each dataset separately, we report the average total time per dataset, including both training and inference. As shown in Tables \ref{tab:time_standard} and \ref{tab:time_iv}, our model is highly efficient.

\begin{wraptable}{r}{0.5\textwidth} 
    \centering
    % \vspace{-1.5 cm}
    \caption{Overall time comparison for standard CATE setting.}
    \vspace{-0.3cm}
    \label{tab:time_standard}
    \scalebox{0.7}{
    \begin{tabular}{lc}
    \toprule
    {Method} & {Time (s)} \\
    \midrule
    \addlinespace[2pt]
    \rowcolor{gray!20}
    \multicolumn{2}{l}{{BASELINES (A): STANDARD CATE ESTIMATORS}}\\
    \cmidrule(lr){1-2}
    S-learner \citep{kunzel2019metalearners} &  $2.76 \times 10^{0}$ \\
    T-learner \citep{kunzel2019metalearners} &  $3.21 \times 10^{0}$\\
    TARNet \citep{shalit2017estimating} & $3.98 \times 10^{0}$\\
    DR-learner \citep{kennedy2023towards} & $1.78 \times 10^{1}$ \\
    RA-learner \citep{curth2021nonparametric} & $1.24 \times 10^{1}$ \\
    X-learner \citep{kunzel2019metalearners} &  $1.93 \times 10^{1}$\\
    \midrule
    \addlinespace[4pt]
    \rowcolor{gray!20}
    \multicolumn{2}{l}{{BASELINES (B): FOUNDATION MODELS-BASED METHODS}}\\
    \cmidrule(lr){1-2}
    CausalPFN \citep{Balazadeh.2025} & $1.27 \times 10^{0}$ \\
    DoPFN \citep{Robertson.2025} & $2.31 \times 10^{0}$ \\
    \cmidrule(lr){1-2}
    \textbf{CausalFM (ours)} & $4.90 \times 10^{-1}$ \\
    \bottomrule
    \multicolumn{2}{p{0.5\textwidth}}{Lower = better. Reported: Time in seconds.}
    \end{tabular}
    } % End of scalebox
    % \vspace{-1cm}
\end{wraptable}

\begin{wraptable}{r}{0.5\textwidth}
\centering
% \vspace{-0.5cm}
\caption{Overall time comparison for IV setting.}
\vspace{-0.2cm}
\label{tab:time_iv}
\small
\resizebox{0.48\textwidth}{!}{%
\begin{tabular}{lc}
    \toprule
    {Method} & {Time (s)} \\
    \midrule
    \addlinespace[2pt]
    \rowcolor{gray!20}
    \multicolumn{2}{l}{{BASELINES (A): STANDARD IV ESTIMATORS}}\\
    \cmidrule(lr){1-2}
KIV \citep{Singh.2019} & $3.24 \times 10^{-1}$\\
DRIV \citep{Syrgkanis.2019} &  $3.87 \times 10^{1}$ \\
DeepIV \citep{Hartford.2017} &  $1.27 \times 10^{1}$ \\
DeepGMM \citep{Bennett.2019} &  $1.85 \times 10^{1}$ \\
DMLIV \citep{Syrgkanis.2019} &  $1.85 \times 10^{1}$\\
DFIV \citep{Xu.2021} & $1.74 \times 10^{1}$ \\
MRIV \citep{frauen2022estimating} & $1.56 \times 10^{1}$ \\
    \midrule
    \addlinespace[4pt]
    \rowcolor{gray!20}
    \multicolumn{2}{l}{{BASELINES (B): FOUNDATION MODELS-BASED METHODS}}\\
    \cmidrule(lr){1-2}
    DoPFN \citep{Robertson.2025} &  $6.53 \times 10^{0}$\\
    \cmidrule(lr){1-2}
\textbf{CausalFM (ours)} & $4.72 \times 10^{-1}$ \\
\bottomrule
\multicolumn{2}{p{0.5\textwidth}}{Lower = better. Reported: Time in seconds.}
\end{tabular}%
}
\vspace{-0.3cm}
\end{wraptable}

\end{document}